\pdfoutput=1
\documentclass{article}

\PassOptionsToPackage{numbers,compress}{natbib}

\newcommand{\paperdir}{.}

\IfFileExists{neurips.sty}{
  \usepackage[preprint]{neurips}
  \renewcommand{\paperdir}{.}
}{
  \PackageError{neurips}{Could not find neurips.sty}{Place neurips.sty in this directory.}
}

\usepackage[utf8]{inputenc} %
\usepackage[T1]{fontenc}    %
\usepackage{hyperref}       %
\usepackage{url}            %
\usepackage{booktabs}       %
\usepackage{amsfonts}       %
\usepackage{nicefrac}       %
\usepackage{microtype}      %
\usepackage[dvipsnames]{xcolor}         %
\usepackage{graphicx}
\usepackage{subcaption}
\usepackage{amssymb}
\usepackage{amsmath}
\usepackage{mathtools}
\usepackage{amsthm}
\usepackage[capitalize,noabbrev]{cleveref}
\usepackage{xspace}
\usepackage{aliascnt}
\usepackage{tikz}
\usetikzlibrary{arrows.meta, backgrounds, positioning, calc, fit, shapes, shapes.symbols, decorations.pathreplacing, decorations.pathmorphing}
\usepackage{enumitem}
\usepackage{makecell}

\theoremstyle{plain}
\newtheorem{theorem}{Theorem}[section]
\newaliascnt{proposition}{theorem}
\newtheorem{proposition}[proposition]{Proposition}
\aliascntresetthe{proposition}

\crefname{proposition}{proposition}{propositions}
\Crefname{proposition}{Proposition}{Propositions}

\newaliascnt{openproblem}{theorem}
\newtheorem{openproblem}[openproblem]{Open Problem}
\aliascntresetthe{openproblem}

\crefname{openproblem}{open problem}{open problems}
\Crefname{openproblem}{Open Problem}{Open Problems}

\theoremstyle{definition}
\newtheorem{definition}[theorem]{Definition}

\theoremstyle{remark}

\definecolor{icmlinstr}{RGB}{200,0,0}     %
\definecolor{llmteal}{RGB}{0,128,128}     %
\definecolor{giovannic}{RGB}{0,70,170}    %
\definecolor{eugenioc}{RGB}{0,120,60}     %
\definecolor{lorenzoc}{RGB}{210,120,0}    %
\definecolor{sashac}{RGB}{120,60,180}     %
\definecolor{michelec}{RGB}{170,0,110}    %

\newcommand{\stat}{\textrm{stat}}

\newcommand{\area}{\textrm{area}}
\newcommand{\bounce}{\textrm{bounce}}
\newcommand{\dinv}{\textrm{dinv}}
\newcommand{\skipstat}{\textrm{skip}}
\newcommand{\magnitude}{\textrm{mag}}  %
\newcommand{\leap}{\textrm{leap}}  %
\newcommand{\sk}{\textrm{skew}}  %
\newcommand{\mingarc}{\textrm{mingarc}}

\newcommand{\methodnamefunctional}{MapSeek-Functional\xspace}
\newcommand{\methodnamesymbolic}{MapSeek-Symbolic\xspace}

\renewcommand{\L}{\mathcal{L}}
\newcommand{\R}{\mathbb{R}}
\newcommand{\N}{\mathbb{N}}

\newcommand{\NC}{\mathsf{NC}}
\newcommand{\PP}{\mathsf{PP}}

\usepackage{xargs}

\newdimen\qtTikzRadiusDim
\newdimen\qtTikzScaleLimitDim

\makeatletter
\newcommandx{\disk}[5][1 = {0,0}, 2 = 0.7, 3 = {black}, 4 = {black}, usedefault]{%
	\def \p {#1}
	\def \r {#2}
	\def \c {#3}
	\def \f {#4}
	\def \n {#5}
	\@diski}

\newcommand\@diski{%
	\begin{scope}[on background layer]
		\draw (\p) [thin, black] circle (\r);
	\end{scope}
	\@diskii}

\newcommand\@diskii{\@ifnextchar\stop{\@diskend}{\@diskiii}}

\newcommand\@diskiii[1]{%
	\draw (\p) pic (a) {disk={\r}{\c}{\f}{\n}{#1}};
	\@diskii
}

\newcommand\@diskend[1]{}
\makeatother

\newcommandx{\block}[5][1 = {0,0}, 2 = 0.7, 3 = {black}, usedefault]{%
	\def \p {#1}
	\def \r {#2}
	\def \f {#3}
	\def \n {#4}

	\begin{scope}[on background layer]
		\draw (\p) [thin, black] circle (\r);
	\end{scope}

	\draw (\p) pic (a) {block={\r}{\f}{\n}{#5}};
}

\newcommandx{\dblock}[5][1 = {0,0}, 2 = 0.7, 3 = {black}, usedefault]{%
	\def \p {#1}
	\def \r {#2}
	\def \f {#3}
	\def \n {#4}

	\begin{scope}[on background layer]
		\draw (\p) [thin, black] circle (\r);
	\end{scope}

	\draw (\p) pic (a) {dblock={\r}{\f}{\n}{#5}};
}

\newcommandx{\labs}[5][1 = {0,0}, 2 = 0.7, 3 = {black}, usedefault]{%
	\def \p {#1}
	\def \r {#2}
	\def \c {#3}
	\def \n {#4}

	\begin{scope}[on background layer]
		\draw (\p) [thin, black] circle (\r);
	\end{scope}

	\draw (\p) pic (a) {labs={\r}{\c}{\n}{#5}};
}

\newcommandx{\dlab}[6][1 = {0,0}, 2 = {0.15}, 3 = {0.05}, 4 = 0.7, 5 = {black}, usedefault]{%
	\def \p {#1}
	\def \x {#2}
	\def \y {#3}
	\def \c {#5}

	\qtTikzRadiusDim=#4 cm\relax
	\qtTikzScaleLimitDim=0.7 cm\relax
	\ifdim\qtTikzRadiusDim<\qtTikzScaleLimitDim \def \s {#4/0.7} \else \def \s {1} \fi

	\filldraw[fill=\c] (\p) circle (2 pt) node[scale=0.8*\s, above] at (\x+\p+\y) {$#6$};
	\filldraw[fill=\c] (\p) circle (2 pt) node[scale=0.8*\s, below] at (-\x-0.05+\p-\y-0.05) {$-#6$};
}

\tikzset{
	pics/block/.style n args = {4}{ %
		code = {

			\qtTikzRadiusDim=#1 cm\relax

			\def \angle {360/#3}

			\filldraw[draw=black, fill=#2, fill opacity=0.3] \foreach \x/\l [count=\xi] in {#4}{
				\ifnum\xi=1 \else--\fi
				({90-\angle*(\x - 0.5)}:\the\qtTikzRadiusDim) node at ({90-\angle*(\x - 0.5)}:{\the\qtTikzRadiusDim+0.3 cm}) {}
			} -- cycle;
		}
	}
}

\tikzset{
	pics/labs/.style n args = {4}{ %
		code = {

			\qtTikzRadiusDim=#1 cm\relax

			\def \angle {360/#3}

			\qtTikzScaleLimitDim=0.7 cm\relax
			\ifdim\qtTikzRadiusDim<\qtTikzScaleLimitDim \def \s {#1/0.7} \else \def \s {1} \fi

			\filldraw[fill=#2] \foreach \x/\l in {#4} {
				({90-\angle*(\x - 0.5)}:\the\qtTikzRadiusDim) circle (2*\s pt) node[scale=0.8*\s] at ({90-\angle*(\x - 0.5)}:{\the\qtTikzRadiusDim+\s*0.3 cm}) {$\l$}
			};
		}
	}
}

\tikzset{
	pics/disk/.style n args = {5}{ %
		code = {

			\qtTikzRadiusDim=#1 cm\relax

			\def \angle {360/#4}

			\qtTikzScaleLimitDim=0.7 cm\relax
			\ifdim\qtTikzRadiusDim<\qtTikzScaleLimitDim \def \s {#1/0.7} \else \def \s {1} \fi

			\draw \foreach \x/\l [count=\xi] in {#5} {
				\ifnum\xi=1 \else--\fi
				({90-\angle*(\x - 0.5)}:\the\qtTikzRadiusDim)
			} -- cycle;

			\filldraw[draw=black, fill=#3, fill opacity=0.3] \foreach \x/\l [count=\xi] in {#5}{
				\ifnum\xi=1 \else--\fi
				({90-\angle*(\x - 0.5)}:\the\qtTikzRadiusDim) node at ({90-\angle*(\x - 0.5)}:{\the\qtTikzRadiusDim+\s*0.3 cm}) {}
			} -- cycle;

			\filldraw[fill=#2] \foreach \x/\l in {#5} {
				({90-\angle*(\x - 0.5)}:\the\qtTikzRadiusDim) circle (2*\s pt) node[scale=0.8*\s] at ({90-\angle*(\x - 0.5)}:{\the\qtTikzRadiusDim+\s*0.3 cm}) {$\l$}
			};
		}
	}
}

\tikzset{
	pics/dblock/.style n args = {4}{ %
		code = {

			\qtTikzRadiusDim=#1 cm\relax

			\def \angle {360/#3}

			\draw \foreach \x/\l [count=\xi] in {#4} {
				\ifnum\xi=1 \else -- \fi
				({90-\angle*(\x - 0.5)}:\the\qtTikzRadiusDim)
			} -- (0:0) -- cycle;

			\fill[opacity=0.3, color=#2] \foreach \x/\l [count=\xi] in {#4}{
				\ifnum\xi=1 \else--\fi
				({90-\angle*(\x - 0.5)}:\the\qtTikzRadiusDim) node at ({90-\angle*(\x - 0.5)}:{\the\qtTikzRadiusDim+0.3 cm}) {}
			} -- (0:0) -- cycle;
		}
	}
}

\graphicspath{{./}{\paperdir/}}

\title{\texorpdfstring{Mapping Uncharted Symmetries:\\Machine Discovery in Combinatorics}{Mapping Uncharted Symmetries: Machine Discovery in Combinatorics}}

\author{%
Eugenio Cainelli\thanks{Equal contribution.}\\
  University of Bologna\\
  eugenio.cainelli2@unibo.it
\And
Lorenzo Luccioli\\
  University of Bologna\\
  lorenzo.luccioli2@unibo.it
\And
Alessandro Iraci\\
  Pegaso University\\
  alessandro.iraci@unipegaso.it
\And
Michele D'Adderio\\
  University of Pisa\\
  michele.dadderio@unipi.it
\And
Giovanni Paolini\footnotemark[1]\\
  University of Bologna\\
  g.paolini@unibo.it
}

\begin{document}

\maketitle
\begin{abstract}
    Inspired by long-standing open problems in algebraic combinatorics, we show that modern machine learning can meaningfully contribute to verifiable mathematical discoveries.
    In particular, we focus on the construction of simple mathematical functions under exact distributional constraints, a setting we formalize as Simple Learning Under Rigid Proportions (SLURP).
    We tackle this problem by introducing two methods: \methodnamefunctional, which models the desired function alternating pseudo-labeling and supervised training steps; and \methodnamesymbolic, designed to directly produce symbolic formulas.
    We successfully apply both methods to a research problem in algebraic combinatorics, discovering a new combinatorial interpretation of the $q,t$-Narayana polynomials arising from representation theory.
    To our knowledge, this is the first such interpretation based on noncrossing partitions.
    Using one discovered statistic, we find a combinatorial proof of the symmetry of these polynomials in a previously unsolved case.
    To streamline verification and reproducibility, we release all code, including a formalization of all the mathematical discoveries of this paper in Lean 4.
\end{abstract}

\section{Introduction}

Many important problems in mathematics can be framed as function discovery: given two sets $X$ and $Y$, find a function $f\colon X\to Y$ that satisfies rigid constraints, yet admits a simple description.
Modern machine learning is exceptionally good at fitting finite datasets, and its inductive biases often favor low-complexity solutions.
In this paper we push these strengths to the limit: we work in settings where success requires \emph{exact} constraint satisfaction rather than high accuracy, and where the desired output is not a black-box predictor but a \emph{symbolic or algorithmic} description, revealing interesting structure and suitable for mathematical verification.

Our driving goal comes from open problems in the field of algebraic combinatorics, asking for functions on combinatorial objects that witness deep algebraic symmetries.
Inspired by such problems, we introduce a learning framework called \textit{Simple Learning Under Rigid Proportions} (SLURP), where the task is to discover simple functions that match specified output distributions exactly (\Cref{fig:slurp-diagram}).
In this setting, the constraints deliberately underspecify the function $f$, so that it is easy to produce example functions, but the challenge is to find simple and mathematically meaningful ones.

\begin{figure}[t]
  \begin{subfigure}[t]{0.495\linewidth}
    \centering
    \resizebox{\linewidth}{!}{%
      \begin{tikzpicture}[
        >=Latex,
        stat/.style={circle, draw, thick, minimum size=4mm, inner sep=0pt},
        humanstat/.style={stat, fill=gray!50},
        start/.style={stat, double, double distance=0.8pt},
        mone/.style={thick, RoyalBlue},
        mtwo/.style={thick, OrangeRed},
        other/.style={thick},
        lab/.style={font=\footnotesize}
      ]

      \node[humanstat, humanstat] (skip) at (0, 0) {};
      \node[lab] at ($(skip) + (-0.6, 0)$) {skip};

      \node[stat] (mag) at (2.3, -2) {};
      \node[lab] at ($(mag) + (0, -0.45)$) {mag};

      \node[stat] (skew) at (2.3, 2) {};
      \node[lab] at ($(skew) + (0, 0.45)$) {skew};

      \node[stat] (leap) at (2, 0) {};
      \node[lab] at ($(leap) + (0, 0.45)$) {leap};

      \node[humanstat] (area) at (4, 0) {};
      \node[lab] at ($(area) + (0.55, -0.2)$) {area};

      \node[humanstat] (bounce) at (5.7, 2) {};
      \node[lab] at ($(bounce) + (0, 0.45)$) {bounce};

      \draw[mone, ->] (skip) -- (mag);
      \draw[mone, ->] (skip) to[bend left=15] (leap);
      \draw[mone, <->] (leap) to[bend left=15] (area);
      \draw[mone, <->] (mag) to[bend left=12] (area);
      \draw[mone, ->] (skew) to[bend left=12] (area);
      \draw[mone, ->] (bounce) to[bend left=15] (area);

      \draw[mtwo, ->] (skip) to[bend right=15] (leap);
      \draw[mtwo, ->] (skip) to (skew);
      \draw[mtwo, <->] (area) to[bend left=15] (leap);
      \draw[mtwo, <->] (area) to[bend left=12] (skew);
      \draw[mtwo, ->] (bounce) to[bend right=15] (area);
      \draw[mtwo, ->] (mag) to[bend right=12] (area);

      \node %
      (leg)
        at (5.36, -2.2)
      {
      \begin{tikzpicture}[>=Latex, baseline=(base)]
        \node (base) {};

        \draw[mone, ->] (0,0.20) -- (0.6,0.20);
        \node[lab, anchor=west] at (0.65,0.20) {\methodnamefunctional};

        \draw[mtwo, ->] (0,-0.20) -- (0.6,-0.20);
        \node[lab, anchor=west] at (0.65,-0.20) {\methodnamesymbolic};

        \node[humanstat, minimum size=3mm] at (0.3, -0.70) {};
        \node[lab, anchor=west] at (0.65, -0.7) {Human-defined statistic};

        \node[stat, minimum size=3mm] at (0.3, -1.1) {};
        \node[lab, anchor=west] at (0.65, -1.1) {ML-discovered statistic};
      \end{tikzpicture}
      };

      \end{tikzpicture}
    }
  \end{subfigure}
  \hfill
  \begin{subfigure}[t]{0.495\linewidth}
    \centering
    \resizebox{\linewidth}{!}{%
      \begin{tikzpicture}[
        font=\small,
        >=Latex,
        panel/.style={draw, rounded corners=14, line width=0.9pt, fill=black!2},
        bag/.style={draw=#1, rounded corners=10, line width=0.9pt, fill=#1, fill opacity=0.18},
        nodept/.style={circle, draw, line width=0.7pt, minimum size=5.0mm, inner sep=0pt, fill=white},
        histbar/.style={draw, line width=0.6pt}
      ]

      \colorlet{bagA}{blue!65!black}
      \colorlet{bagB}{orange!80!black}
      \colorlet{bagC}{green!55!black}
      \node[panel, minimum width=5.2cm, minimum height=7.6cm, anchor=south west] (L) at (0,0) {};
      \node[font=\bfseries] at ($(L.north)+(0,-0.35)$) {Domain $X$};

      \node[panel, minimum width=5.2cm, minimum height=7.6cm, anchor=south west] (R) at ($(L.south east)+(0.95,0)$) {};
      \node[font=\bfseries] at ($(R.north)+(0,-0.38)$) {Rigid proportions on $Y$};

      \node[bag=bagA, anchor=north west, minimum width=4.0cm, minimum height=1.7cm] (bag1)
        at ($(L.north west)+(0.55,-0.65)$) {};
      \node[bag=bagB, anchor=north west, minimum width=4.0cm, minimum height=2.0cm] (bag2)
        at ($(L.north west)+(0.55,-2.65)$) {};
      \node[bag=bagC, anchor=north west, minimum width=4.0cm, minimum height=1.85cm] (bag3)
        at ($(L.north west)+(0.55,-4.95)$) {};

      \node[font=\small, text=bagA] at ($(bag1.south)+(0,0.22)$) {$B_1$};
      \node[font=\small, text=bagB] at ($(bag2.south)+(0,0.22)$) {$B_2$};
      \node[font=\small, text=bagC] at ($(bag3.south)+(0,0.22)$) {$B_3$};

      \node[nodept, draw=bagA] at ($(bag1.north west)+(0.75,-0.60)$) {};
      \node[nodept, draw=bagA] at ($(bag1.north west)+(1.80,-0.95)$) {};
      \node[nodept, draw=bagA] at ($(bag1.north west)+(3.10,-0.55)$) {};

      \node[nodept, draw=bagB] at ($(bag2.north west)+(0.70,-0.60)$) {};
      \node[nodept, draw=bagB] at ($(bag2.north west)+(1.70,-0.55)$) {};
      \node[nodept, draw=bagB] at ($(bag2.north west)+(2.60,-1.10)$) {};
      \node[nodept, draw=bagB] at ($(bag2.north west)+(3.40,-0.55)$) {};
      \node[nodept, draw=bagB] at ($(bag2.north west)+(1.10,-1.45)$) {};

      \node[nodept, draw=bagC] at ($(bag3.north west)+(0.70,-0.75)$) {};
      \node[nodept, draw=bagC] at ($(bag3.north west)+(1.65,-1.15)$) {};
      \node[nodept, draw=bagC] at ($(bag3.north west)+(2.55,-0.75)$) {};
      \node[nodept, draw=bagC] at ($(bag3.north west)+(3.25,-1.35)$) {};
      \node[nodept, draw=bagC] at ($(bag3.north west)+(2.0,-0.55)$) {};
      \node[nodept, draw=bagC] at ($(bag3.north west)+(3.55,-0.65)$) {};

      \coordinate (h1) at ($(R.north west)+(2.6,-1.52)$);
      \coordinate (h2) at ($(R.north west)+(2.6,-3.67)$);
      \coordinate (h3) at ($(R.north west)+(2.6,-5.89)$);

      \begin{scope}[scale=1.15]
        \draw[histbar, draw=bagA!80!black, fill=bagA!35] ($(h1)+(-0.60,-0.45)$) rectangle ++(0.30,1.10);
        \draw[histbar, draw=bagA!80!black, fill=bagA!35] ($(h1)+(-0.15,-0.45)$) rectangle ++(0.30,0.70);
        \draw[histbar, draw=bagA!80!black, fill=bagA!35] ($(h1)+(0.30,-0.45)$) rectangle ++(0.30,1.00);

        \draw[histbar, draw=bagB!80!black, fill=bagB!35] ($(h2)+(-0.60,-0.45)$) rectangle ++(0.30,0.85);
        \draw[histbar, draw=bagB!80!black, fill=bagB!35] ($(h2)+(-0.15,-0.45)$) rectangle ++(0.30,0.55);
        \draw[histbar, draw=bagB!80!black, fill=bagB!35] ($(h2)+(0.30,-0.45)$) rectangle ++(0.30,0.95);

        \draw[histbar, draw=bagC!80!black, fill=bagC!35] ($(h3)+(-0.60,-0.45)$) rectangle ++(0.30,1.00);
        \draw[histbar, draw=bagC!80!black, fill=bagC!35] ($(h3)+(-0.15,-0.45)$) rectangle ++(0.30,1.30);
        \draw[histbar, draw=bagC!80!black, fill=bagC!35] ($(h3)+(0.30,-0.45)$) rectangle ++(0.30,1.05);
      \end{scope}

      \node[font=\small, text=bagA] at ($(h1)+(0,-0.85)$) {$D_1$};
      \node[font=\small, text=bagB] at ($(h2)+(0,-0.85)$) {$D_2$};
      \node[font=\small, text=bagC] at ($(h3)+(0,-0.85)$) {$D_3$};

      \draw[->, line width=0.9pt] ($(bag1.east)+(0.25,0)$) -- ($(h1)+(-1.35,0)$);
      \draw[->, line width=0.9pt] ($(bag2.east)+(0.25,0)$) -- ($(h2)+(-1.35,0)$);
      \draw[->, line width=0.9pt] ($(bag3.east)+(0.25,0)$) -- ($(h3)+(-1.35,0)$);

      \node at ($(bag3.south)+(0,-0.45)$) {\large $\cdots$};
      \node at ($(h3)+(0,-1.39)$) {\large $\cdots$};

      \node [font=\bfseries, rotate=90, fill=white] at ($(bag2.east)+(1.25,0)$) {Simple function $f$};

      \end{tikzpicture}%
    }
  \end{subfigure}
  \caption[PCA clusters and entropy trace for MapSeek-Functional runs]{
    Left: atlas of notable statistics (nodes) for \(k=3\). An edge $f \to g$ indicates that, starting from statistic $f$, one of our methods discovers $g$ as a partner statistic, so that $(f, g)$ provides a combinatorial interpretation of Narayana polynomials $N_{n, 3}(q, t)$.
    Right: a SLURP instance where the domain $X$ is partitioned into bags, and supervision is provided via per-bag output distributions.
  }
  \label{fig:stat-atlas}
  \label{fig:slurp-diagram}
\end{figure}

We introduce two machine learning methods for SLURP: \methodnamefunctional and \methodnamesymbolic.
To demonstrate their effectiveness and the soundness of the SLURP setup, we describe a successful application %
to the combinatorial study of the $q,t$-Narayana polynomials $N_{n, k}(q, t)$ arising from representation theory.
In this context, an important problem is to find interpretations of $q,t$-polynomials in terms of pairs of simple functions (\textit{statistics}) on a set of combinatorial objects; often, one candidate statistic is known while a partner statistic is missing.
We find that both methods are effective in discovering statistics and yield new combinatorial interpretations of $N_{n, 3}(q, t)$.
Successful pairings are represented as directed edges in \Cref{fig:stat-atlas}.

One interpretation,
independently suggested by both our ML pipelines, leads us to a new combinatorial proof of the symmetry $N_{n, 3}(q, t) = N_{n, 3}(t, q)$ via an explicit bijection that exchanges the \textit{skip} and \textit{leap} statistics.
This constitutes one of the first combinatorial proofs of $q,t$-symmetry across all of algebraic combinatorics; constructing such exchanging bijections is widely regarded as an extremely difficult open problem.
To streamline verification, we formalize the full proof in the Lean~4 proof assistant, which is increasingly becoming a standard tool for formalizing mathematics.

The main contributions of this paper are as follows:
\begin{itemize}[leftmargin=*, itemsep=2pt, topsep=0pt, parsep=0pt, partopsep=0pt]

    \item Drawing inspiration from algebraic combinatorics problems, we formalize the SLURP framework.

    \item We introduce two methods for SLURP: a functional self-training method (\methodnamefunctional) and a symbolic search method (\methodnamesymbolic). The code will be open-sourced upon acceptance.

    \item We propose the task of discovering statistics as a testbed for SLURP.
    In particular, we release the \emph{skip-pairing benchmark}, where the aim is to discover interpretations of $q,t$-Narayana polynomials.

    \item We use \methodnamefunctional and \methodnamesymbolic to discover interpretable and mathematically interesting statistics, and build upon them to derive new combinatorial results.

    \item We construct an explicit exchanging bijection for the symmetry $N_{n,3}(q,t)=N_{n,3}(t,q)$, thus solving \Cref{prob:narayana-bijection} in the case $k=3$, and we provide the first interpretation of $N_{n,k}(q,t)$ in terms of noncrossing partitions. We formalize these results in Lean~4 and release the code.
\end{itemize}

Our broader goal is to build flexible tools that, in the hands of expert mathematicians, can leverage human intuition and extend their reach.

\section{Related Work}

\paragraph{Machine learning for mathematical discovery.}
The application of machine learning to mathematics has attracted significant interest in recent years. A substantial body of work focuses on automated theorem proving, often leveraging large language models \cite{aristotle, SeedProver}. Separately, a growing line of research employs ML not as a prover but as an exploratory tool for discovering mathematical objects, constructing counterexamples, and formulating conjectures.
Examples include searching for extremal graphs \cite{Wagner}, studying algebraic invariants \cite{PetarNature}, and synthesizing Lyapunov functions for dynamical systems \cite{Lyapunov}; see also \citep{Williamson}.
Our contribution stands within this paradigm, with the aim of constructing functions (combinatorial statistics and bijections) that satisfy exact constraints.

\paragraph{Automated discovery in $q,t$-combinatorics.} The search for combinatorial interpretations and exchanging bijections is a central challenge in algebraic combinatorics. Recently, ML has been applied to this specific domain: in \citep{zeta-map-AI}, a Transformer was trained to approximate the \textit{zeta map} \cite{HaglundBook}, recovering a new algorithmic interpretation of this known map by analyzing attention weights; in \citep{openevolve}, LLMs and evolutionary algorithms were used to generate bijection candidates. However, these contributions have primarily served as proofs of concept, re-deriving established results. To our knowledge, our work is the first to use ML to propose \emph{previously unknown} combinatorial statistics, used to construct an exchanging bijection that solves an open problem in $q,t$-combinatorics.

\paragraph{Learning from label proportions (LLP).} Our SLURP framework shares with LLP the use of bag-level distributional constraints rather than individual labels~\cite{DBLP:journals/jmlr/QuadriantoSCL09,ScottZhang2020,Havaldar2024,EasyLLP2023}.
However, the settings differ fundamentally.
In LLP, a fixed unknown distribution generates instances, and distributional constraints serve as a proxy for hidden labels; in contrast, for us distributional constraints are the full problem specification.
Moreover, LLP aims to learn a classifier with good generalization, typically allowing approximate constraint satisfaction, %
whereas SLURP requires exact satisfaction along with a concise symbolic or algorithmic description (and, in our applications,
solutions that can be verified on infinite domains).

\paragraph{Program synthesis (PS).}
The objective of recovering explicit symbolic formulas under symbolic constraints situates SLURP within the broader scope of PS \citep{PS, PS2, PS3}. Formally, SLURP can be cast as a PS instance by making the hypothesis class explicit, e.g.\ by defining a domain specific language or bounding syntactic complexity.
Nevertheless, SLURP differs from typical PS work because the domains are infinite and the constraints are distributional (bag-level) rather than pointwise.
Crucially, standard automated verification tools like SMT solvers cannot check constraint satisfaction over infinite domains.
We therefore present SLURP as a proof-driven discovery framework rather than a PS variant: we rely on machine learning to propose candidates from finite data, while the simplicity requirement ensures these candidates are amenable to the mathematical proof.

\section{Simple Learning Under Rigid Proportions}
\label{sec:slurp-methods}

Inspired by algebraic combinatorics, we introduce a new learning framework called \emph{Simple Learning Under Rigid Proportions} (SLURP).
This framework formalizes settings where the goal is to discover a \emph{simple} function $f \colon X \to Y$ that satisfies \emph{rigid proportion constraints}.
We first describe the problem formulation, and then our two approaches.

\subsection{Problem Formulation}

The SLURP constraints are expressed as follows: the domain $X$ is partitioned into bags $B_1, B_2, \dots$, and each bag $B_i$ comes with a target distribution $D_i$ on $Y$ (\Cref{fig:slurp-diagram}).
The goal is to find a function $f \colon X \to Y$ such that:

\begin{enumerate}[label=(\arabic*), itemsep=2pt, topsep=0pt, parsep=0pt, partopsep=0pt]
    \item The outputs on each bag $B_i$ are distributed according to $D_i$ exactly (\textbf{rigid proportion constraint}).

    \item The function $f$ is provided in the form of a symbolic description (e.g., a closed formula or algorithm), evaluable within a given time bound (e.g., polynomial), and as short as possible (\textbf{simplicity requirement}).

\end{enumerate}

The simplicity requirement is not merely aesthetic but a fundamental necessity. Without it, one could trivially satisfy the rigid proportion constraints by arbitrarily assigning values to domain elements in a way that matches the target distributions, without revealing any mathematical insight.

\paragraph{Key characteristics.}
SLURP differs from traditional learning settings, such as LLP, in several fundamental ways.
First, the constraints are \textit{rigid}, meaning that solutions must satisfy them exactly, not approximately.
While we can measure how close a function is to satisfying the constraints (to use it as a learning signal, or to track progress during learning), there is no notion of an ``almost correct'' solution. Any deviation from exact satisfaction means the problem remains unsolved.

Second, the aggregated constraints are not a weak proxy of a ground-truth solution, but rather the true end-goal of our framework. Thus, the problem is underdetermined and multiple simple functions may be viable solutions.
The challenge is to discover functions that are both feasible and mathematically interesting.

\paragraph{Learning on infinite domains.}
When the domain $X$ is infinite, as in the practical scenarios we tackle in this paper, we limit the learning process to only use a finite training subset $X_{\mathrm{train}}\subset X$ consisting of a union of a finite number of bags~$B_i$.
The symbolic description of a solution $f \colon X_{\mathrm{train}} \to Y$ to the finite SLURP problem can be interpreted as a candidate extension $\tilde f \colon X \to Y$; however, the only way to obtain a true guarantee that $\tilde f$ satisfies the rigid proportion constraints on the whole infinite domain is to prove it mathematically.

\subsection{\methodnamefunctional}

In the functional approach to SLURP, we train a model $f_\theta \colon X \to Y$ by alternating two steps: (i) \textit{constraint projection} and (ii) \textit{supervised update}.
During constraint projection, we consider the function encoded by the current model $f_\theta$ and find the closest function $\hat f\colon X \to Y$ satisfying the rigid proportion constraints.
Typically, this means solving
\[
\min_{\hat f}\ \sum_{x \in X} d \! \left(f_\theta(x), \hat f(x) \right),
\]
where $d$ is a distance function on $Y$.
Then, in the supervised update, the pseudo-labels $\hat y = \hat f(x)$ are used as targets to run multiple SGD updates on $f_\theta$; in practice, we run one full epoch at every supervised update, setting aside a validation subset that is not used for gradient updates.
Note that the validation set undergoes the constraint projection step.

This procedure is repeated until $f_\theta$ satisfies the rigid proportion constraints (so, the projection step does nothing) or a maximum number of iterations is reached and the run is discarded.
It can be viewed as a self-training optimization resembling the classical expectation-maximization algorithm.

The stopping criterion is strict: a training run is considered successful only if the rigid proportion constraints are perfectly satisfied.
In particular, the model is required to generalize to the validation set in order to fit the constraints, thus accepting only sufficiently well-behaved functions.
The size of the validation set is effectively a hyperparameter requiring calibration: enlarging the validation set decreases the probability of convergence, while empirically filtering out less interpretable runs.

After convergence, the model $f_\theta$ encodes a function $X \to Y$; in our experiments we are often able to manually distill this function into a symbolic expression.
In general, distillation could be automated via symbolic regression or program synthesis
(optionally after ensembling multiple runs).

\subsection{\methodnamesymbolic}
\label{sec:mapseek-symbolic}

The symbolic approach begins by defining an alphabet $\Sigma$ of primitive symbols from which we construct a formal language $\L \subseteq \Sigma^*$ of syntactically valid formulas.
Each formula $\varphi \in \L$ defines a function $f_\varphi: X \to Y$ through its explicit algorithmic description, ensuring that functions expressible within $\L$ are inherently simple by virtue of their compact symbolic representation.

We define a distance $\delta:\L\to\R_{\geq 0}$ that quantifies how much $f_\varphi$ violates the rigid proportion constraints.
Our aim is to discover formulas $\varphi\in\L$ that achieve $\delta(\varphi)=0$, i.e.\ satisfying the constraints.
While any such formula technically solves the problem, we prefer shorter expressions, that are more likely to reveal mathematical structure and admit formal verification on infinite domains.

The search proceeds via an iterative population-based search, typically through the deep cross-entropy method using a generative sequence model $g_\theta$. The process alternates between (i) a \emph{generation phase}, where a batch of candidate formulas is proposed, and (ii) an \emph{evaluation phase}, where the distance $\delta$ is computed for each candidate.
The information from high-performing candidates—those with the lowest distance values—is aggregated to bias the generation mechanism in the subsequent iteration toward more promising regions of the search space. The loop continues until a formula $\varphi\in\L$ achieving $\delta(\varphi)=0$ is discovered, returning an exact symbolic solution.

\section{\texorpdfstring{$q,t$-Combinatorics Setup}{q,t-Combinatorics Setup}}
\label{sec:qtcombinatorics}

Within algebraic combinatorics, the area of $q,t$-com\-bi\-na\-to\-rics studies families of bivariate polynomials $F(q,t)$ with nonnegative integer coefficients arising in a variety of mathematical domains such as symmetric functions, representation theory, algebraic geometry, knot theory, probability, and integrable systems \cite{HaglundBook,ThetaOperators}.
Two recurring problems motivate much of the subject.

\textbf{Type 1 problem (combinatorial interpretation):}
Given $F(q,t)$, find a set $X$ of combinatorial objects and two functions $\stat_1,\stat_2\colon X\to\N$ (called \emph{statistics}) such that
\begin{equation}
     F(q, t) = \sum_{x \in X} q^{\stat_1(x)}\, t^{\stat_2(x)}.
\end{equation}

\textbf{Type 2 problem (exchanging bijection).}
Given a combinatorial interpretation, prove the symmetry $F(q,t)=F(t,q)$ by constructing a bijection $\psi\colon X\to X$ that exchanges the two statistics:
\[
    \big(\stat_1(\psi(x)),\stat_2(\psi(x))\big)
  \;=\;
  \big(\stat_2(x),\stat_1(x)\big).
\]

Both types of problems are typically difficult. For example, it took 8 years to find a combinatorial interpretation for the famous $q,t$-Catalan polynomials \cite{qtCatalan}, whereas a bijective proof of their symmetry remains open.
These problems provide a rich source of interesting SLURP instances: one searches for simple statistics and explicit bijections under exact distributional constraints.

\subsection{Narayana Polynomials}

In this paper we tackle the aforementioned problems on the $q,t$-Narayana polynomials $N_{n,k}(q,t)$, indexed by integers $n\ge k\ge 1$, arising from the diagonal coinvariants of the symmetric group \cite{qtNarayana,DeltaConjecture}.
They refine the classical Narayana numbers $N_{n,k}(1,1)$, which count many families of objects, including \textit{parallelogram polyominoes} and \textit{noncrossing partitions}.

In \citep{qtNarayana}, a combinatorial interpretation is given in terms of polyominoes (see Appendix~\ref{sec:qt-appendix}), and the symmetry
$N_{n,k}(q,t)=N_{n,k}(t,q)$ is proved by algebraic methods.
A bijective proof is still missing in general:
\begin{openproblem}
    Find an exchanging bijection proof of the symmetry $N_{n, k}(q, t) = N_{n, k}(t, q)$.
    \label{prob:narayana-bijection}
\end{openproblem}
In this work, we solve \Cref{prob:narayana-bijection} for the first interesting case $k=3$ (the cases $k \leq 2$ are trivial) via new interpretations of $N_{n,3}(q,t)$ in terms of noncrossing partitions.

\subsection{Noncrossing partitions}

A partition of $[n]=\{1,\dots,n\}$ is a collection of nonempty, pairwise disjoint subsets (called blocks) whose union is $[n]$.
For example, $\{\{1,4\},\{2,3,6\},\{5\}\}$ is a partition of $[6]$, and we write it more compactly as $14 \mid 236$, omitting singleton blocks.
A partition is \emph{noncrossing} if, when numbers are placed on a circle in increasing order and blocks are drawn by convex hulls, these hulls do not intersect.
Noncrossing partitions are counted by the Narayana numbers: the set $\NC(n, k)$ of noncrossing partitions with $n-k+1$ blocks has size $N_{n, k}(1, 1)$.

\begin{figure}[t]
    \centering
    \begin{tikzpicture}[scale=0.82, every node/.style={font=\scriptsize}]
    \def\r{0.55}

    \newcommand{\basefour}{
      \draw (0,0) circle (\r);

      \coordinate (v1) at ( 135:\r);
      \coordinate (v2) at (  45:\r);
      \coordinate (v3) at ( -45:\r);
      \coordinate (v4) at (-135:\r);

      \fill (v1) circle (1.6pt);
      \fill (v2) circle (1.6pt);
      \fill (v3) circle (1.6pt);
      \fill (v4) circle (1.6pt);

      \node[anchor=south east] at ($(0,0)!0.9!(v1)$) {$1$};
      \node[anchor=south west] at ($(0,0)!0.9!(v2)$) {$2$};
      \node[anchor=north west] at ($(0,0)!0.9!(v3)$) {$3$};
      \node[anchor=north east] at ($(0,0)!0.9!(v4)$) {$4$};
    }

    \begin{scope}[xshift=0.0cm]
      \basefour
      \draw[fill=black!20] (v1)--(v2)--(v3)--cycle;
    \end{scope}

    \begin{scope}[xshift=2cm]
      \basefour
      \draw[fill=black!20] (v1)--(v2)--(v4)--cycle;
    \end{scope}

    \begin{scope}[xshift=4cm]
      \basefour
      \draw[fill=black!20] (v1)--(v3)--(v4)--cycle;
    \end{scope}

    \begin{scope}[xshift=6cm]
      \basefour
      \draw[fill=black!20] (v2)--(v3)--(v4)--cycle;
    \end{scope}

    \begin{scope}[xshift=8cm]
      \basefour
      \draw (v1)--(v2);
      \draw (v3)--(v4);
    \end{scope}

    \begin{scope}[xshift=10cm]
      \basefour
      \draw (v1)--(v4);
      \draw (v2)--(v3);
    \end{scope}

    \end{tikzpicture}
    \caption{
        Noncrossing partitions $\NC(4, 3)$.
        The corresponding $q,t$-Narayana is $q^2 + qt + t^2 + q + t + 1$.
    }
    \label{fig:ncpartitions-n4-k3}
\end{figure}

For $k=3$, we obtain partitions with $n-2$ blocks (\Cref{fig:ncpartitions-n4-k3}), which take the following two forms:
\begin{itemize}[itemsep=2pt, topsep=0pt, parsep=0pt, partopsep=0pt]
    \item $ab \mid cd$ with $a<b$, $a<c$, and $c<d$;
    \item $abc$ with $a<b<c$.
\end{itemize}

We introduce a natural statistic on noncrossing partitions, to be used as a starting point to search for new combinatorial interpretations of Narayana polynomials. This statistic, called \textit{skip}, counts how many numbers are skipped inside each block of a partition, and is defined as follows:
\begin{equation}
    \skipstat(\pi) = \sum_{\beta \in \pi} (\max \beta - \min\beta - |\beta| + 1).
    \label{eq:skip}
\end{equation}

\section{The Skip-Pairing Benchmark}
\label{sec:skip-testbed}

As a testbed for combinatorial function discovery within the SLURP framework, we propose the concrete problem of constructing simple statistics on noncrossing partitions that, together with the skip statistic \eqref{eq:skip}, provide a combinatorial interpretation of the Narayana polynomials~$N_{n, k}(q, t)$.

Practically, this means that noncrossing partitions are grouped by the value $u = \skipstat(\pi)$; within each group, the number of partitions $\pi$ with $\stat(\pi) = v$ must equal the coefficient of $q^u t^v$ in $N_{n, k}(q, t)$.

To increase supervision, we optionally refine the constraints via a natural filtration of noncrossing partitions:
for a partition $\pi$, let $m(\pi)$ be the largest element that belongs to a non-singleton block;
group partitions by the pair $(m,u)$ where $m=m(\pi)$ and $u=\skipstat(\pi)$.
In this refined setting, the expected distribution for each group is prescribed by the incremental polynomials
$N_{m,k}(q,t) - N_{m-1,k}(q,t)$, which have nonnegative coefficients by \Cref{prop:incremental-narayana-positive}.

To foster research on combinatorial function discovery, we release code to generate the dataset for the skip-pairing problem for any $n \geq k \geq 1$, both in the unrefined and refined setting.
In \Cref{sec:experiments}, we use the refined $N_{14, 3}$ dataset, consisting of $2366$ partitions grouped into $144$ bags, to discover multiple interesting statistics paired with skip for $k=3$.

\section{Experiments and Results}
\label{sec:experiments}

\subsection{Statistic Discovery with \methodnamefunctional}
\label{sec:experiments-functional}

\paragraph{Model Architecture.}
We run the \methodnamefunctional pipeline using a Transformer encoder with $8$ layers, $4$ attention heads, hidden dimension $128$, and feedforward dimension $256$.
The input to the Transformer is a fixed sequence of $n$ tokens, encoded by taking the real numbers $i/(n-1)$ for $i=0, \dots, n-1$ and passing them through a linear embedding layer (with bias).

Each partition $\pi$ is represented by its block‑adjacency matrix $M\in\{0,1\}^{n\times n}$ where $M_{ij}=1$ if $i$ and $j$ lie in the same block.
The partition matrix $M$ enters the model as an attention mask: in alternating layers, self‑attention is restricted to entries with $M_{ij}=1$ (the final layer is unmasked).
The model outputs the attention logits from the last layer, averaged over heads and passed through a sigmoid, yielding a score matrix $S\in [0, 1]^{n\times n}$.
The sum $\sum_{i,j} S_{ij}$ is the model's output, kept as a floating point number for the constraint projection step.

\paragraph{Training procedure.}
Within each family (same $(m, u)$), we sort partitions by the current model's output and assign pseudo-labels matching the required multiplicities from the rigid proportion constraints.
If $v$ is the pseudo-label assigned to a partition, we construct a binary $n \times n$ target matrix by taking the top-$v$ entries of $S$.
The model is trained with a binary cross-entropy loss between $S$ and this target mask.
We use the Adam optimizer with learning rate $10^{-4}$ and batch size $32$.
In each supervised update step, we run one full training epoch on the current pseudo‑labels.

\paragraph{Initial experiments.}
We run \methodnamefunctional on the refined $N_{14, 3}$ skip-pairing dataset with a budget of $1000$ iterations and a $1\%$ validation split.
However, we find the outputs of successful runs difficult to distill into simple formulas.
Therefore, in subsequent experiments we add extra bias (in the form of warm start) or homogeneity constraints.
Note that the ability to incorporate such information into the search allows mathematicians to inject expert knowledge and intuition about the potential structure or properties of the solution. Often a candidate statistic is available but does not satisfy all constraints, and an automated procedure to modify into a working statistic can be extremely helpful.

\paragraph{Experiments with warm start.}
In this setup, we warm-start each run with $20$ epochs of supervised pretraining on the following heuristic (human-defined) candidate, which matches the constraints for $n \leq 4$ but not for $n \geq 5$:
\begin{equation*}
    \sum_{\beta \in \pi} \left( \textrm{sum}(\beta) - \max(\beta) \right) - \textstyle\frac{k(k-1)}{2}.
\end{equation*}
Specifically, the supervised pretraining target for the score matrix $S$ is the $n \times n$ binary matrix $T$ where $T_{ij} = 1$ if $i$ is not the maximum of its block and $1 \le j \le i - q(i)$, with $q(i)$ the number of non‑largest elements $\le i$.

We then ran \methodnamefunctional for up to $300$ iterations with a $10\%$ validation split.
Out of $76$ successful runs, $61$ runs converged to exactly the same predictions on all $2366$ partitions.
We name the resulting statistic \textit{mag} (for ``magnitude'')
and manually deduce the following formula:
\begin{equation}
    \begin{split}
        \magnitude(ab \mid cd) &= a + \max(b, c) - 2 - \left\lceil \tfrac{b - a}{2} \right\rceil \\
        \magnitude(abd) &= a + b - 2 - \left\lceil \tfrac{b - a}{2} \right\rceil
    \end{split}
    \label{eq:magnitude}
\end{equation}
(the second formula being the limit case $b=c$ of the first formula).
Interestingly, this resembles the $t$-statistic of \citep{gorsky2020generalized}, which pairs with the area statistic on certain combinatorial objects, yielding a combinatorial interpretation of other $q,t$-polynomials.

\paragraph{Experiments with homogeneity constraints.}
As an alternative to warm start, which requires a heuristic candidate, we add supervision to \methodnamefunctional by forcing homogeneity (i.e., translation-equivariance).
We say that a statistic is $h$-homogeneous if, when adding $1$ to all elements of each non-singleton block, the statistic increases by $h$.
For example, the skip is $0$-homogeneous and the mag is $2$-homogeneous.
It is easy to see that $h$-homogeneous statistics paired with skip only exist for $h=1,2$ (either of these constraints still leaves huge flexibility).

We enforce this in the projection step, assigning pseudo-labels in increasing value of $m$ while greedily propagating homogeneity.

We ran this setup using a $5\%$ validation split and a budget of $1000$ iterations, obtaining $14$ successful runs out of $50$ for $h=1$ and $18$ out of $50$ for $h=2$.

For $h=1$, we had $4$ runs converging to exactly the same predictions, described by the following formula (we name this the \textit{leap} statistic):
\begin{equation}
    \begin{split}
        \leap(ab \mid cd) &= c - 2 + \max(0, \, c-b) \\
        \leap(abd) &= b - 2
    \end{split}
    \label{eq:leap}
\end{equation}
(the second formula is the limit case $b=c$ of the first one).
Out of the remaining $10$ successful runs, $5$ differed from leap by less than $0.5\%$, and the other $5$ were much farther from leap and each other ($\ge5\%$ disagreement).

For $h=2$, interpretation was slightly less straightforward.
We clustered the $18$ successful runs, treating runs as $2366$-dimensional prediction vectors, using agglomerative hierarchical clustering with average linkage (\Cref{fig:functional-pca-h2}).
The highest silhouette score was obtained with $3$ clusters of sizes $11$, $6$, and $1$.
The medoid of the second cluster yields exactly the mag statistic \eqref{eq:magnitude}; the medoid of the first cluster yields a modification of mag, with a $\pm 1$ correction based on the parity of $b-a$ when $b < c$.

Overall, for both $h=1,2$ we found two ostensibly simple statistics (leap and mag), despite \methodnamefunctional having no explicit bias toward short symbolic descriptions.

\begin{figure}[t]
  \begin{center}
    \begin{subfigure}[t]{0.495\linewidth}
      \centering
      \includegraphics[width=0.85\linewidth]{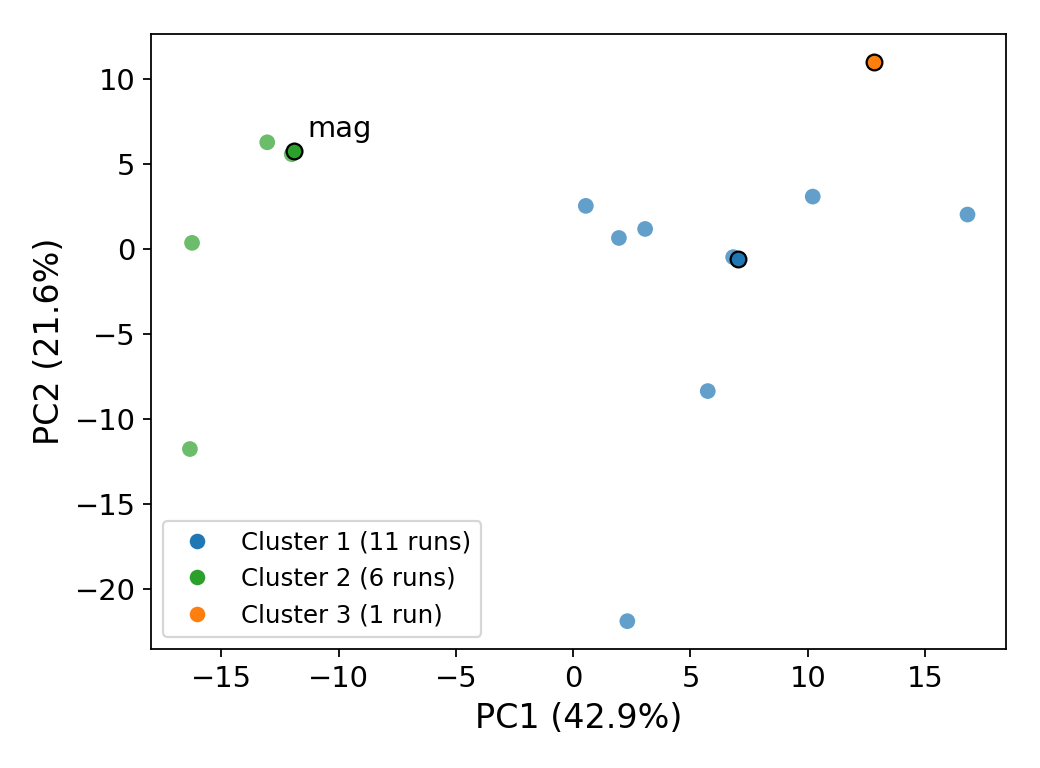}
    \end{subfigure}
    \hfill
    \begin{subfigure}[t]{0.495\linewidth}
      \includegraphics[width=\linewidth]{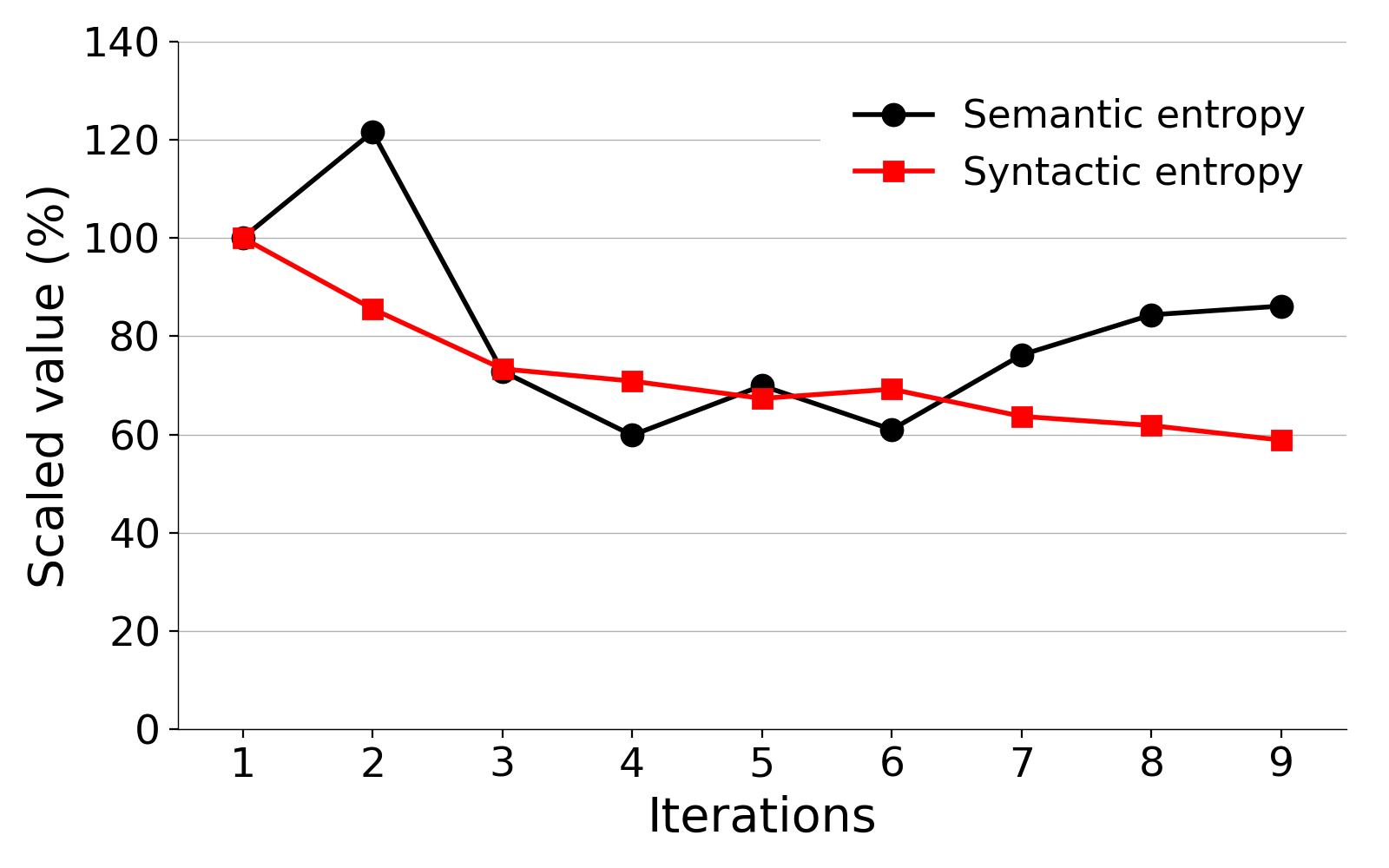}
    \end{subfigure}
    \caption{
      Left: PCA scatter plot of the \methodnamefunctional successful runs (refined $N_{14, 3}$ skip-pairing dataset, $h=2$). Cluster medoids are highlighted. The explained variance of the principal components PC1 and PC2 is shown on the axes.
      Right: semantic and syntactic entropy of a representative run. Values are scaled with respect to the first iteration.
    }
    \label{fig:functional-pca-h2}
    \label{fig:semantic-collapse}
  \end{center}
\end{figure}

\paragraph{Ablation study on validation set size.}
A peculiarity of \methodnamefunctional is that the size of the validation set is a hyperparameter trading off success rate and simplicity.
\Cref{tab:valsize-ablation} shows its effect in the $h=2$ setup. %

As the validation split increases, the success rate drops.
The number of runs matching mag exactly stays roughly constant, so a larger split appears to filter out less interpretable solutions.
Notably, at $10\%$ validation, the two successful runs are nearly identical, and one coincides with mag.

\begin{table}[t]
    \centering
    \caption[Ablation on validation set size in MapSeek-Functional]{Ablation on validation set size in \methodnamefunctional (refined $N_{14,3}$ skip-pairing dataset, $h=2$, with $50$ runs per setting).}
    \label{tab:valsize-ablation}
    \setlength{\tabcolsep}{3.2pt}
    \begin{tabular}{ccccc}
    \toprule
    Val & \makecell{\# Success} & \makecell{Med.\ disagree\\ between runs} & \makecell{Med.\ disagree\\ with mag} & \# Exact mag \\
    \midrule
    $0\%$  & $49$ & $14.8\%$ & $18.6\%$ & $1$ \\
    $1\%$  & $42$ & $11.7\%$ & $18.6\%$ & $2$ \\
    $2\%$  & $27$ & $15.3\%$ & $15.3\%$ & $3$ \\
    $5\%$  & $18$ & $16.9\%$ & $16.8\%$ & $1$ \\
    $10\%$ & $2$ & $0.13\%$ & $0.06\%$ & $1$ \\
    \bottomrule
    \end{tabular}
\end{table}

\subsection{Statistic Discovery with \methodnamesymbolic}
\label{sec:experiments-symbolic}

\paragraph{Formulas and Evaluation.}
We run \methodnamesymbolic using the alphabet
\[
    \Sigma=\{a,b,c,d,i,0,1,2,3,+,-,\%, (,),\vee,\wedge,\neg,>,<,=\}.
\]
A valid formula $\varphi\in\Sigma^*$ defines an integer-valued function $f_\varphi$ on noncrossing partitions with $k=3$ by evaluating $\varphi$ on a $4$-tuple $(a,b,c,d)$ (for a $3$-block partition $abd$ we use $(a,b,b,d)$).
Logical operators behave like Python short-circuit operators and can be used to express conditional integer-valued expressions; comparisons produce $0/1$.
Given a syntactically valid formula $\varphi$, we can treat it as defining a function by summing the formula evaluation over the additional variable $i$:
\[
    f_\varphi(\pi) = \sum_{i=1}^{n} \operatorname{eval}\!\big(\varphi; a, b, c, d, i \big).
\]

\paragraph{Distance.}
We define a distance function $\delta \colon \L \to \R_{\geq 0}$ measuring how far the formula $\varphi$ deviates from satisfying the constraints.
For each bag $B_j$, the constraints specify a target distribution of output values $D_j$, prescribed by the refined $N_{14,3}$ skip-pairing dataset.
Let $D_j^\varphi = \{ f_\varphi(x) \mid x \in B_j \}$ be the distribution of values produced by formula $\varphi$ on the bag $B_j$.
We define $\operatorname{count}(y, S)$ as the multiplicity of the value $y$ in the distribution $S$, and $\delta$ as a sum of $L^1$ distances between these distributions:
\begin{equation}
    \delta(\varphi) = \sum_{j} \sum_{y \in Y} \left| \operatorname{count}(y, D_j) - \operatorname{count}(y, D_j^\varphi) \right|.
    \label{eq:naive_distance}
\end{equation}

\paragraph{Initial formulas.}

We initialize each run by sampling one million unique well-formed formulas, constructed via a recursive grammar that composes arithmetic, comparison, and logical operations.
We restrict candidates to the form $(\text{expression}) \,\%\, 3$;
this way, each formula evaluation lies in $\{0,1,2\}$ and the resulting value $f_\varphi(\pi)$ lies between $0$ and $2n$ (the actual maximum value is $2n-6$).
To maximize search coverage, different runs employ one of $20$ distinct hyperparameter configurations, varying recursion depth ($2$--$4$), operator probabilities, and length ($5$--$60$ characters).

\paragraph{Genetic algorithm (GA) baseline.}
We implemented a GA baseline that optimizes formulas directly using the distance $\delta$ as fitness.
The GA consistently finds zero-distance formulas yielding the leap statistic \eqref{eq:leap}, but with extreme formula bloat ($100+$ characters long); this behavior is consistent with the fitness-causes-bloat effect \cite{bloat1}.
We applied standard anti-bloat measures (e.g.\ depth/size limits) but could not obtain consistently compact formulas satisfying the constraints, in accordance with empirical studies that highlight the limits of such regularizers \cite{bloat2}.
Therefore, we moved to a controlled symbolic approach using the deep cross-entropy method (CEM) with a generative sequence model.

\paragraph{Model architecture.}
We run the deep CEM pipeline using a Transformer decoder with $4$ layers, $8$ attention heads, hidden dimension $256$, and feedforward dimension $256$. We employ a character-level tokenizer.
\paragraph{Training Procedure.} Each run begins by evaluating the distance $\delta(\varphi)$ for all $1$ million randomly generated formulas in the initial population. We select the top $5\%$ formulas with the lowest distance values to form the elite subset, from which we reserve $1000$ for validation and use the remaining as training data for the Transformer model.
The model is trained from scratch using standard cross-entropy loss for $60$ epochs with the AdamW optimizer, learning rate $5 \times 10^{-5}$, weight decay $0.001$, and batch size $2048$.
We select the checkpoint with the lowest validation loss.
We then sample $1$ million new formulas from this trained model, evaluate their distances, extract a new elite subset, and repeat the process.
Crucially, the Transformer is retrained from scratch at each iteration to maintain exploration throughout the search.

\paragraph{Skip-pairing experiments.} We ran the deep CEM pipeline %
and consistently obtained successful runs with $7$ of the $20$ hyperparameter configurations defining the initial population.
Remarkably, the runs always converged to the leap statistic in $4$ of these configurations, each time discovering syntactically different formulas that represent the same function.
In the remaining $3$ successful configurations, the following new statistic is discovered:
\begin{equation}
    \begin{split}
        \sk(ab \mid cd) &= c - 2 +
        \begin{cases}
            a & \text{if } b < c \\
            0 & \text{otherwise}
        \end{cases} \\
        \sk(abd) &= b - 2.
    \end{split}
    \label{eq:skew}
\end{equation}

\paragraph{Training dynamics.}
To understand the training dynamics of the deep CEM approach, we monitor population diversity across iterations through the following metrics (\Cref{fig:semantic-collapse}): the entropy of the distribution of unique functions computed by the generated formulas (\emph{semantic}), and the token-level entropy of the model (\emph{syntactic}).
Semantic entropy remains relatively stable, indicating that the population does not degenerate into a set of functionally identical clones.
In contrast, syntactic entropy exhibits a steady, monotonic decline.
This suggests that the model learns to exploit the structural properties of successful formulas, gradually focusing the search space while retaining exploratory variance.
This is unlike standard RL on LMs, which suffers from entropy collapse \cite{entropy-collapse}.

\paragraph{Formula notation and stability.}
The symbolic representation of formulas significantly impacts training stability. We compared three notations of varying compactness: reverse polish notation (most compact, no parentheses), standard infix notation, and infix with explicit whitespace.
We find that compactness inversely correlates with stability: RPN runs degraded rapidly, producing illegible formulas after few iterations, while whitespace-enriched notation gave the best results.
We hypothesize that explicit structural delimiters (parentheses and spaces) provide cues that help the Transformer preserve coherent formula structure during generation. Without these, the model tends to generate syntactically valid but redundant expressions, such as
\[
    (((c>i) \,\%\, i \,\%\, 23)\wedge((b>i)\wedge2\wedge0\,\%\,30+10\vee12))\,\%\,3.
\]

\paragraph{Wasserstein: $L^1$ vs trivial metric.}
We investigated whether replacing the distance $\delta$ with the sorted $L^1$ distance would improve convergence.
From an optimal transport perspective, these two objectives represent the Wasserstein distance equipped with different ground metrics: the trivial metric, where the cost is $0$ for identical points and $1$ otherwise, and the $1$-dimensional $L^1$ metric.
The trivial metric treats statistic discovery as a classification problem, viewing all output values as equidistant classes; conversely, the $L^1$ metric interprets the task as regression by rewarding numerical proximity.
\methodnamesymbolic failed to converge in all runs using the sorted $L^1$ distance, suggesting that the problem is better regarded as a classification task: the constraints demand exact symbolic matching, and the magnitude of numerical violation is not a useful learning signal.

\subsection{Extending the Atlas}
\label{sec:extending-the-atlas}

More combinatorial interpretations of $q,t$-Narayana polynomials $N_{n, 3}(q, t)$ emerge by running \methodnamefunctional and \methodnamesymbolic on statistics other than skip.
\Cref{fig:stat-atlas} shows the most notable statistics and connections between them found by the two methods.

Although both methods used a finite training dataset, we are able to mathematically verify most discovered pairings:
for every edge in \Cref{fig:stat-atlas}, we prove the joint distribution matches $N_{n, 3}(q, t)$ for all $n$.
Proofs are formalized in Lean~4 and available as supplementary material (see Appendix~\ref{sec:lean-appendix}).
This validates our claim that infinite SLURP problems can be effectively solved by applying ML to finite subproblems.

A large number of emerging connections involve the \textit{area} statistic, which (like the \textit{bounce} statistic) is the translation of a known statistic on polyominoes (see Appendix~\ref{sec:qt-appendix}).
This is notable as the area is arguably the simplest known statistic on polyominoes related to $q,t$-Narayana polynomials, thus suggesting that both methods are biased toward human-designed statistics.

\section{ML-Assisted Mathematical Contributions}
\label{sec:bijection}

In addition to discovering several interpretations of $N_{n, 3}$ in Section~\ref{sec:experiments}, the machine-found statistics also lead to two human mathematical contributions.
First, we prove an exchanging bijection for $(\skipstat,\leap)$, hence giving the first combinatorial proof of the symmetry $N_{n, 3}(q, t) = N_{n, 3}(t, q)$.
For $a < b \leq c < d$ the four elements in non-singleton blocks, with $b = c$ in the size-$3$ case, the map reflects $b$ in $[a, c]$ and then reflects $a, b, c$ in $[0, d]$ (Appendix~\ref{sec:qt-appendix}, \Cref{fig:bijection-n4-k3}).
In a formula:
\begin{equation*}
   (a,\, b,\, c,\, d) \mapsto (d-c, \; b+d-a-c, \; d-a, \; d).
\end{equation*}

Moreover, by extending the flipped version of \textit{skew} (its composition with the flip map $i \mapsto n+1-i$), we obtain a new statistic, called \textit{mingarc} (min gap arc), which pairs with skip to give the first combinatorial interpretation of $N_{n,k}(q,t)$ using noncrossing partitions for all $k$ (see Appendix~\ref{sec:mingarc-appendix}).

We formalize that the above bijection exchanges skip and leap; combined with the Lean proof that $(\skipstat, \leap)$ is distributed according to $N_{n, 3}(q, t)$ for all $n$, this gives a fully verifiable solution to \Cref{prob:narayana-bijection} for $k=3$.
We also formalize the skip-mingarc pairing, providing the first combinatorial interpretation of $N_{n, k}(q, t)$ based on noncrossing partitions for all $k$.

\section{Conclusions and Future Directions}

This paper shows how ML can be effectively applied to mathematical discovery problems where rigid constraints must be exactly satisfied and simplicity is a key requirement.

Several directions appear immediately.
The main mathematical challenge lies in more difficult $q,t$-combinatorics problems, such as discovering new $N_{n, k}(q,t)$ interpretations for $k \geq 4$.
While the statistics and exchanging bijection we obtain for $k=3$ can be written in closed forms, we expect many natural solutions for larger $k$ to be inherently algorithmic (e.g, the bounce statistic, defined in Appendix~\ref{sec:qt-appendix}).

On the methodological side, our current $k=4$ experiments highlight concrete failure modes that must be resolved: \methodnamefunctional converges to solutions seemingly difficult to interpret; \methodnamesymbolic is unable to reach zero distance from exact constraint match.

More broadly, it remains to understand how SLURP methods explore the space of simple solutions, and whether all solutions a human mathematician would consider meaningful can in fact be found automatically.
It would also be beneficial to quantify simplicity and generalization through principled metrics for SLURP problems.

\IfFileExists{./\jobname.bbl}{
  \bibliographystyle{plainnat}
  \bibliography{\paperdir/bibliography}
}{
  \IfFileExists{\paperdir/\jobname.bbl}{

  }{
    \bibliographystyle{plainnat}
    \bibliography{\paperdir/bibliography}
  }
}

\newpage
\appendix
\section{\texorpdfstring{$q,t$-Combinatorics Background \& Narayana Models}{q,t-Combinatorics Background and Narayana Models}}
\label{sec:qt-appendix}
Triggered by the discovery of the famous Macdonald polynomials \cite{MacdonaldOriginalSLC}, $q,t$-combinatorics studies families of bivariate polynomials $F(q,t)$ with nonnegative integer coefficients arising in a variety of mathematical domains.

Historically, the first examples that attracted an intense research activity occurred in the study of the diagonal coinvariants of the symmetric group \cite{nFactorial}.
The bivariate polynomials arising from these algebraic objects keep track of the multiplicities of their ``building blocks'' (irreducible representations): computing these multiplicities is the prototypical problem of representation theory. The $q,t$-Narayana polynomials $N_{n,k}(q,t)$ are a notable example.

Interest in $q,t$-combinatorics flourished over the past 25 years.
Consequently, a formidable number of new formulas, conjectures, open problems, and interactions with other disciplines have been discovered.
Among these new problems, a prominent role is played by the Type 1 and Type 2 problems mentioned in Section~\ref{sec:qtcombinatorics} \cite{HaglundBook,ThetaOperators}.
Our hope is that the methods presented in this paper will lead to breakthroughs in this fascinating area.

The Type 2 problem of proving combinatorially the $q,t$-symmetry $N_{n,k}(q,t)=N_{n,k}(t,q)$ is one of the outstanding open problems of the theory. A combinatorial interpretation of $N_{n,k}(q,t)$ in terms of \emph{polyominoes}, together with an algebraic proof of its $q,t$-symmetry, first appeared in \citep{qtNarayana}.
The relation between $q,t$-Narayana, polyominoes, and noncrossing partitions is described in the next sections.

\subsection{Polyominoes}

A \emph{parallelogram polyomino} of size $m \times n$ is a pair of lattice paths from $(0,0)$ to $(m,n)$, using north and east steps, that meet only in the extremal points (\Cref{fig:polyomino-area-bounce}).
The set of all polyominoes of size $m \times n$ is denoted by $\PP(m,n)$.

\definecolor{oiBlue}{RGB}{0,74,138}

\tikzset{
  idxlab/.style={font=\tiny, text=oiBlue},
  toplab/.style={font=\tiny, text=purple},
  bouncelab/.style={font=\scriptsize, text=red},
  labbg/.style={fill=white, fill opacity=0.75, text opacity=1, inner sep=1pt}
}

\begin{figure}[!ht]
  \centering
  \begin{minipage}[t]{0.48\textwidth}
    \centering
    \begin{tikzpicture}[scale=0.5]
    \draw[black, thin] (0,0) grid (12,7);

    \filldraw[black, opacity=0.1] (0,0) -- (3,0) -- (3,1) -- (5,1) -- (5,3) -- (7,3) -- (7,4) -- (10,4) -- (10,5) -- (12,5) -- (12,7) -- (8,7) -- (8,5) -- (5,5) -- (5,4) -- (3,4) -- (3,3) -- (0,3) -- cycle;

    \draw[dashed, line width=1.6pt] (0,0) -- (3,0) -- (3,1) -- (5,1) -- (5,3) -- (7,3) -- (7,4) -- (10,4) -- (10,5) -- (12,5) -- (12,7);

    \draw[line width=1.6pt] (0,0) -- (0,3) -- (3,3) -- (3,4) -- (5,4) -- (5,5) -- (8,5) -- (8,7) -- (12,7);

    \end{tikzpicture}
  \end{minipage}
  \hfill
  \begin{minipage}[t]{0.48\textwidth}
    \centering
    \begin{tikzpicture}[scale=0.5]
    \draw[black, thin] (0,0) grid (12,7);

    \filldraw[black, opacity=0.1] (0,0) -- (3,0) -- (3,1) -- (5,1) -- (5,3) -- (7,3) -- (7,4) -- (10,4) -- (10,5) -- (12,5) -- (12,7) -- (8,7) -- (8,5) -- (5,5) -- (5,4) -- (3,4) -- (3,3) -- (0,3) -- cycle;

    \draw[dashed, line width=1.6pt] (0,0) -- (3,0) -- (3,1) -- (5,1) -- (5,3) -- (7,3) -- (7,4) -- (10,4) -- (10,5) -- (12,5) -- (12,7);
    \draw[line width=1.6pt] (0,0) -- (0,3) -- (3,3) -- (3,4) -- (5,4) -- (5,5) -- (8,5) -- (8,7) -- (12,7);

    \draw[red,  line width=1.6pt, opacity=0.6] (0,0) -- (1,0) -- (1,3) -- (5,3) -- (5,4) -- (7,4) -- (7,5) -- (10,5) -- (10,7) -- (12,7);
    \node[red] at (1,0) {$\bullet$};
    \node[red] at (5,3) {$\bullet$};
    \node[red] at (7,4) {$\bullet$};
    \node[red] at (10,5) {$\bullet$};
      \begin{scope}[bouncelab]
        \node[above] at (0.5,0) {$0$};
    	\node[right] at (1,0.5) {$1$};
    	\node[right] at (1,1.5) {$1$};
    	\node[right] at (1,2.5) {$1$};
    	\node[above] at (1.5,3) {$1$};
    	\node[above] at (2.5,3) {$1$};
    	\node[above] at (3.5,3) {$1$};
    	\node[above] at (4.5,3) {$1$};
    	\node[right] at (5,3.5) {$2$};
    	\node[above] at (5.5,4) {$2$};
    	\node[above] at (6.5,4) {$2$};
    	\node[right] at (7,4.5) {$3$};
    	\node[above] at (7.5,5) {$3$};
    	\node[above] at (8.5,5) {$3$};
    	\node[above] at (9.5,5) {$3$};
    	\node[right] at (10,5.5) {$4$};
    	\node[right] at (10,6.5) {$4$};
    	\node[above] at (10.5,7) {$4$};
    	\node[above] at (11.5,7) {$4$};
      \end{scope}
    \end{tikzpicture}
  \end{minipage}

  \caption{A $12 \times 7$ polyomino with area $30 - 18 = 12$ (left) and its bounce path (right); the bounce statistic is equal to $41 - 18 = 23$.}
  \label{fig:polyomino-area-bounce}
\end{figure}

\begin{definition}
	The \emph{area} statistic on $\PP(m,n)$ is the number of whole squares between the two paths, normalized by subtracting $m+n-1$ (so that the minimum possible area is $0$).
\end{definition}

Parallelogram polyominoes have a \emph{bounce path}. To compute it, start from $(0, 0)$, draw a single east step, and then follow this algorithm: draw north steps until the path hits the polyomino's ceiling; then draw east steps until the path hits the polyomino's floor; repeat until it reaches $(m,n)$, as shown in \Cref{fig:polyomino-area-bounce}.

\begin{definition}
    The \emph{bounce} statistic on $\PP(m,n)$ is defined as the sum, over all steps of the bounce path, of the number of east-north turns preceding the step;
    normalize by subtracting $m+n-1$ (so that the minimum possible bounce is $0$).
    \label{def:bounce}
\end{definition}

A third statistic called \textit{dinv} was also introduced in \citep{qtNarayana}.
Both pairs $(\area, \bounce)$ and $(\dinv, \area)$ were proven to yield combinatorial interpretations of the $q,t$-Narayana polynomials:
\[ N_{n, k}(q,t) = \sum_{P\in \PP(n-k+1,k)}q^{\area(P)}t^{\bounce(P)} = \sum_{P\in \PP(n-k+1,k)}q^{\dinv(P)}t^{\area(P)}. \]
In addition, the symmetry $N_{n, k}(q, t) = N_{n, k}(t, q)$ was proved by algebraic methods; a combinatorial proof is still unknown.

\begin{figure}[ht]
  \centering
  \begin{tikzpicture}[scale=0.78, every node/.style={font=\scriptsize}]
    \begin{scope}
      \draw[gray!45, thin] (0,0) grid (4,3);
      \filldraw[fill=black!10, draw=none]
        (0,0) -- (1,0) -- (1,1) -- (2,1) -- (3,1) -- (3,2) -- (4,2) -- (4,3) --
        (1,3) -- (1,2) -- (0,2) -- cycle;
      \draw[dashed, line width=1.2pt] (0,0) -- (1,0) -- (1,1) -- (2,1) -- (3,1) -- (3,2) -- (4,2) -- (4,3);
      \draw[line width=1.2pt] (0,0) -- (0,1) -- (0,2) -- (1,2) -- (1,3) -- (2,3) -- (3,3) -- (4,3);
      \draw[red, line width=1.2pt, opacity=0.6] (0,0) -- (1,0) -- (1,2) -- (3,2) -- (3,3) -- (4,3);
      \node[red] at (1,0) {$\bullet$};
      \node[red] at (3,2) {$\bullet$};
      \begin{scope}[bouncelab]
        \node[above] at (0.5,0) {$0$};
        \node[right] at (1,0.5) {$1$};
        \node[right] at (1,1.5) {$1$};
        \node[above] at (1.5,2) {$1$};
        \node[above] at (2.5,2) {$1$};
        \node[right] at (3,2.5) {$2$};
        \node[above] at (3.5,3) {$2$};
      \end{scope}
      \node at (2,-0.55) {$P \in \PP(4,3)$};
    \end{scope}

    \draw[->, thick] (4.7,1.5) -- (6.3,1.5) node[midway, above] {$\iota$};

    \begin{scope}[xshift=7.1cm]
      \draw[gray!45, thin] (0,0) grid (4,4);
      \filldraw[fill=black!10, draw=none]
        (0,0) -- (1,0) -- (1,2) -- (2,2) -- (3,2) -- (3,3) -- (4,3) -- (4,4) --
        (1,4) -- (1,3) -- (0,3) -- cycle;
      \filldraw[fill=blue!25, draw=blue!50!black]
        (0,0) rectangle (1,1);
      \draw[dashed, line width=1.2pt] (0,0) -- (1,0) -- (1,1) -- (1,2) -- (2,2) -- (3,2) -- (3,3) -- (4,3) -- (4,4);
      \draw[line width=1.2pt] (0,0) -- (0,1) -- (0,2) -- (0,3) -- (1,3) -- (1,4) -- (2,4) -- (3,4) -- (4,4);
      \draw[red, line width=1.2pt, opacity=0.6] (0,0) -- (1,0) -- (1,3) -- (3,3) -- (3,4) -- (4,4);
      \node[red] at (1,0) {$\bullet$};
      \node[red] at (3,3) {$\bullet$};
      \begin{scope}[bouncelab]
        \node[above] at (0.5,0) {$0$};
        \node[right] at (1,0.5) {$1$};
        \node[right] at (1,1.5) {$1$};
        \node[right] at (1,2.5) {$1$};
        \node[above] at (1.5,3) {$1$};
        \node[above] at (2.5,3) {$1$};
        \node[right] at (3,3.5) {$2$};
        \node[above] at (3.5,4) {$2$};
      \end{scope}
      \node at (2,-0.55) {$\iota(P) \in \PP(4,4)$};
    \end{scope}
  \end{tikzpicture}
  \caption{The embedding used in \Cref{prop:incremental-narayana-positive}. A new bottom row is added, but only its leftmost cell is kept, so there is exactly one more square between the two paths. The red bounce path acquires one extra step in its first vertical run.}
  \label{fig:polyomino-growth-embedding}
\end{figure}

The refinement introduced in Section~\ref{sec:skip-testbed} is justified by the following monotonicity property of $q,t$-Narayana polynomials.

\begin{proposition}
    \label{prop:incremental-narayana-positive}
    For every fixed $k \ge 1$, each coefficient of $N_{m,k}(q,t)$ is nondecreasing in $m$.
    Equivalently, the incremental polynomial
    \[
        N_{m,k}(q,t)-N_{m-1,k}(q,t)
    \]
    has nonnegative coefficients.
\end{proposition}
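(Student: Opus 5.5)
We use the polyomino model $N_{m,k}(q,t)=\sum_{P\in\PP(m-k+1,k)}q^{\area(P)}t^{\bounce(P)}$ and build an injection $\iota\colon \PP(m-k,k)\to\PP(m-k+1,k)$ that preserves both $\area$ and $\bounce$. The coefficient of $q^ut^v$ in $N_{m-1,k}$ counts polyominoes in $\PP(m-k,k)$ with $(\area,\bounce)=(u,v)$. Applying $\iota$ maps these injectively to polyominoes in $\PP(m-k+1,k)$ with the same pair of statistics. So that coefficient is at most the corresponding coefficient of $N_{m,k}$, and the incremental polynomial has nonnegative coefficients.

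The natural construction is the one suggested by \Cref{fig:polyomino-growth-embedding}: lengthen the polyomino by one unit in the first coordinate by adding a single cell in the lower-left corner. Concretely, for $P\in\PP(m-k,k)$ we prepend one step to each bounding path.
\begin{itemize}
\item The upper path gets one extra initial north step.
\item The lower path gets one extra initial east step, followed by a north step, in such a way that only one new cell appears at the start.
\end{itemize}
We will fix the exact convention (which coordinate plays the role of $m-k+1$) so that the figure's picture $\PP(4,3)\to\PP(4,4)$ matches. In that picture the growth is actually in the second coordinate; since $\PP(a,b)$ and $\PP(b,a)$ are related by transposition, we state the construction for whichever dimension equals $m-k+1$. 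If area or bounce is not transpose-invariant, we will adapt accordingly, using the $(\dinv,\area)$ model if needed.

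First, check that $\iota(P)$ is a parallelogram polyomino: the two paths still meet only at their endpoints. Second, check that $\iota$ is injective. Deleting the first cell recovers $P$, since the image polyominoes are exactly those whose first column (or row) consists of a single cell at the bottom.

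Third, the area. The number of cells grows by exactly $1$, and the normalization $m+n-1$ also grows by $1$, so $\area$ is unchanged.

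Fourth, the bounce, which is the step I expect to be the main obstacle. The bounce path of $\iota(P)$ begins with its forced east step and then goes north. Its first vertical run is one step longer than the first vertical run in $P$, because the ceiling has been raised by one. After that run, the bounce path continues exactly as in $P$, shifted by one, since all later ceiling and floor contacts are unchanged. The extra step lies before any east-north turn, so it contributes $0$ to the sum in \Cref{def:bounce}. Every other step has the same number of preceding east-north turns as before. The total is therefore unchanged, and the normalization absorbs the extra unit of size exactly as for area.

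The delicate part is confirming that the first east step plus the lengthened first vertical run really reproduces $P$'s bounce path from the point where the original first vertical run ended. I would verify this by tracking the coordinates of the first bounce point: it moves from $(1,h)$ to $(1,h+1)$, and thereafter the two paths coincide after translation. The contributions counted in \Cref{def:bounce} should then agree step for step, except for the single added step, which contributes $0$.
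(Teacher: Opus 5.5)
Your injection is the paper's ($U\mapsto NU$, $EL\mapsto ENL$: a new bottom row that keeps only its leftmost cell). Note that the lower path gets a north step inserted after its existing initial east step, not an extra east step. Your area argument is fine, but the bounce step is wrong as written. The bounce path's initial east step is immediately followed by a north step, so there is an east--north turn at $(1,0)$. Every step of the first vertical run is preceded by exactly one such turn, so these steps have weight $1$ (see the labels $1,1,1$ in \Cref{fig:polyomino-area-bounce}). The inserted step therefore contributes $1$, not $0$. Your bookkeeping is also self-inconsistent: if the unnormalized sum were unchanged, subtracting a normalization that has grown by $1$ would give $\bounce(\iota(P))=\bounce(P)-1$. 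Then $\iota$ would shift $t$-degrees and the coefficient comparison would fail. The correct accounting, which is the paper's, is that the unnormalized bounce grows by exactly $1$ and the larger normalization cancels it. Your tracking $(1,h)\mapsto(1,h+1)$ is exactly what shows all other weights are unchanged.

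The second gap is the convention you left open. In $N_{n,k}=\sum_{P\in\PP(n-k+1,k)}q^{\area(P)}t^{\bounce(P)}$ the side that grows with $n$ is horizontal, but the single-cell insertion preserves bounce only when it adds a bottom row. Growing horizontally (a new left column consisting of one bottom cell) makes the bounce path hit the ceiling at height $1$. For example, it sends the two-cell column in $\PP(1,2)$ (bounce $0$) to the three-cell polyomino in $\PP(2,2)$ with upper path $NENE$ (bounce $1$). Transposing pointwise does not help either, since bounce is not transposition-invariant: transposition swaps the two three-cell polyominoes of $\PP(2,2)$, whose bounces are $0$ and $1$. What you need is the equidistribution $\sum_{\PP(a,b)}q^{\area}t^{\bounce}=\sum_{\PP(b,a)}q^{\area}t^{\bounce}$. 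This is the symmetry of the $q,t$-Narayana polynomials in the two rectangle directions \citep[Theorem~6.2]{qtNarayana}, which the paper invokes to redraw the model with the $m$-dependent side vertical. Your fallback, ``adapt accordingly, using the $(\dinv,\area)$ model if needed'', does not supply this ingredient.
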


\begin{proof}
    It is convenient to draw the model so that the rectangle side depending on $m$ is vertical; this is justified by the symmetry of the $q,t$-Narayana polynomials in the two rectangle directions \citep[Theorem~6.2]{qtNarayana}.
    In this picture, increasing $m$ by $1$ amounts to adding one new bottom row.
    Given a polyomino $P$, define $\iota(P)$ by adding such a bottom row and keeping only its leftmost cell, as in \Cref{fig:polyomino-growth-embedding}.
    Equivalently, if the upper path of $P$ is $U$ and the lower path is $EL$, then the upper and lower paths of $\iota(P)$ are $NU$ and $ENL$, respectively.
    This construction is clearly injective.

    The unnormalized area increases by exactly $1$, because $\iota(P)$ contains exactly one extra cell.
    But the normalization also increases by $1$, since we now subtract one more unit from the enclosing rectangle size.
    Therefore $\area(\iota(P))=\area(P)$.

    The bounce path of $\iota(P)$ is obtained from the bounce path of $P$ by inserting one extra step in its first vertical run.
    Hence the unnormalized bounce also increases by $1$.
    Again, this is exactly canceled by the larger normalization term, so $\bounce(\iota(P))=\bounce(P)$.

    Thus, for every pair $(a,b)$, the number of polyominoes with $(\area,\bounce)=(a,b)$ cannot decrease when $m$ increases by $1$.
    In other words, every coefficient of $N_{m,k}(q,t)$ is nondecreasing in $m$, and the difference $N_{m,k}(q,t)-N_{m-1,k}(q,t)$ has nonnegative coefficients.
\end{proof}

\subsection{From Polyominoes to Noncrossing Partitions}

Noncrossing partitions $\NC(n, k)$ and parallelogram polyominoes $\PP(n-k+1,k)$ are both counted by the Narayana numbers.
A bridge between these two sets of combinatorial objects is given by the following bijection (which, to the best of our knowledge, is not present in the literature).

\begin{definition}
    Let $\eta \colon \PP(n-k+1,k) \rightarrow \NC(n,k)$ be defined as follows.
    Label from $1$ to $n$ the cells adjacent to the floor of a polyomino, going bottom to top, left to right (call these the \emph{indexing labels}).
    Then label the cells adjacent to the ceiling, again bottom to top, left to right, always writing the largest unused indexing label which is present in the same row or below.
    The entries in each row are the blocks of the target noncrossing partition (\Cref{fig:polyominoes-to-noncrossing}).

\end{definition}

 \begin{figure}[!ht]
 	\centering
    \resizebox{0.96\linewidth}{!}{%
 	\begin{tikzpicture}[scale=0.6]
     	\draw[black, thin] (0,0) grid (12,7);

     	\filldraw[black, opacity=0.1] (0,0) -- (3,0) -- (3,1) -- (5,1) -- (5,3) -- (7,3) -- (7,4) -- (10,4) -- (10,5) -- (12,5) -- (12,7) -- (8,7) -- (8,5) -- (5,5) -- (5,4) -- (3,4) -- (3,3) -- (0,3) -- cycle;

     	\draw[dashed, line width=1.6pt] (0,0) -- (3,0) -- (3,1) -- (5,1) -- (5,3) -- (7,3) -- (7,4) -- (10,4) -- (10,5) -- (12,5) -- (12,7);

     	\draw[line width=1.6pt] (0,0) -- (0,3) -- (3,3) -- (3,4) -- (5,4) -- (5,5) -- (8,5) -- (8,7) -- (12,7);

        \begin{scope}[idxlab]
            \node[above] at (0.75, 0) {$1$};
            \node[above] at (1.75, 0) {$2$};
            \node[above] at (2.75, 0) {$3$};
            \node[above] at (2.75, 1) {$4$};
            \node[above] at (3.75, 1) {$5$};
            \node[above] at (4.75, 1) {$6$};
            \node[above] at (4.75, 2) {$7$};
            \node[above] at (4.75, 3) {$8$};
            \node[above] at (5.75, 3) {$9$};
            \node[above] at (6.7, 3) {$10$};
            \node[above] at (6.7, 4) {$11$};
            \node[above] at (7.7, 4) {$12$};
            \node[above] at (8.7, 4) {$13$};
            \node[above] at (9.7, 4) {$14$};
            \node[above] at (9.7, 5) {$15$};
            \node[above] at (10.7, 5) {$16$};
            \node[above] at (11.7, 5) {$17$};
            \node[above] at (11.7, 6) {$18$};
        \end{scope}

        \begin{scope}[toplab]
            \node[right] at (0,0.7) {$3$};
            \node[right] at (0,1.7) {$6$};
            \node[right] at (0,2.7) {$7$};
            \node[right] at (1,2.7) {$5$};
            \node[right] at (2,2.7) {$4$};
            \node[right] at (3,2.7) {$2$};
            \node[right] at (3,3.7) {$10$};
            \node[right] at (4,3.7) {$9$};
            \node[right] at (5,3.7) {$8$};
            \node[right] at (5,4.7) {$14$};
            \node[right] at (6,4.7) {$13$};
            \node[right] at (7,4.7) {$12$};
            \node[right] at (8,4.7) {$11$};
            \node[right] at (8,5.7) {$17$};
            \node[right] at (8,6.7) {$18$};
            \node[right] at (9,6.7) {$16$};
            \node[right] at (10,6.7) {$15$};
            \node[right] at (11,6.7) {$1$};
        \end{scope}

        \draw[->] (13, 3.5) -- (15, 3.5);
 	\end{tikzpicture}
    \hspace{0.21cm}
    \begin{tikzpicture}
        \disk[0,0][1.8]{18}{3}{6}{2,4,5,7}{8,9,10}{11,12,13,14}{17}{1,15,16,18} \stop
    \end{tikzpicture}
    }
 	\caption{An example of the bijection $\eta$.}
	\label{fig:polyominoes-to-noncrossing}
 \end{figure}

We omit the proof that $\eta$ is a bijection.
This bijection can be used to translate statistics from polyominoes to noncrossing partitions and vice versa; this is how we defined area and bounce on noncrossing partitions (appearing in \Cref{fig:stat-atlas}).
For the area, we have the following explicit formula on $\NC(n, k)$ for all $n \geq k \geq 1$:
\[ \area(\pi) = \sum_{\beta, \beta' \in \pi} \# \{i \in \beta' \mid i < \max \beta < \max \beta' \}. \]
This is closely related to the known \textit{inv} statistic on ordered set partitions \cite{RemmelWilson2015}.

The skip statistic from \eqref{eq:skip} can also be read directly from the drawing of a noncrossing partition.
If $\beta=\{b_1<\dots<b_r\}$ is a block, draw the edges joining consecutive elements in increasing order, namely
$b_1b_2,b_2b_3,\dots,b_{r-1}b_r$, and ignore only the closing edge $b_rb_1$.
Each counted edge $b_ib_{i+1}$ contributes the number of labels strictly between its endpoints, namely $b_{i+1}-b_i-1$.
Summing these contributions over all blocks recovers the skip statistic, because
\[
    \sum_{i=1}^{r-1}(b_{i+1}-b_i-1)=b_r-b_1-r+1=\max\beta-\min\beta-|\beta|+1.
\]
An example is shown in \Cref{fig:skip-on-nc}.

\begin{figure}[ht]
  \centering
  \begin{minipage}[c]{0.54\textwidth}
    \centering
    \begin{tikzpicture}[scale=1.16, every node/.style={font=\scriptsize}]
      \def\R{2.18}
      \colorlet{skipblue}{blue!65!black}
      \colorlet{skipgray}{black!45}
      \colorlet{skiporange}{orange!35}
      \colorlet{skipteal}{teal!32}
      \tikzset{
        skipcount/.style={
          circle,
          draw=skipblue,
          fill=white,
          line width=0.45pt,
          inner sep=1.25pt,
          font=\bfseries\footnotesize,
          text=skipblue
        }
      }

      \foreach \i [evaluate=\i as \ang using 90-40*(\i-1)] in {1,...,9} {
        \coordinate (p\i) at (\ang:\R);
      }

      \fill[skiporange, opacity=0.42] (p1) -- (p4) -- (p5) -- cycle;
      \fill[skipteal, opacity=0.42] (p6) -- (p8) -- (p9) -- cycle;

      \draw[thin] (0,0) circle (\R);

      \draw[skiporange!80!black, line width=1.15pt] (p1) -- (p4) -- (p5);
      \draw[skipgray, dashed, line width=1pt] (p5) -- (p1);

      \draw[skipteal!75!black, line width=1.15pt] (p6) -- (p8) -- (p9);
      \draw[skipgray, dashed, line width=1pt] (p9) -- (p6);

      \foreach \i [evaluate=\i as \ang using 90-40*(\i-1)] in {1,...,9} {
        \fill (p\i) circle (1.9pt);
        \node at (\ang:{\R+0.36}) {$\i$};
      }

      \node[skipcount] at ($(p1)!0.5!(p4)+(0.20,0.04)$) {$2$};
      \node[skipcount] at ($(p4)!0.5!(p5)+(0.03,-0.16)$) {$0$};
      \node[skipcount] at ($(p6)!0.5!(p8)+(-0.18,-0.03)$) {$1$};
      \node[skipcount] at ($(p8)!0.5!(p9)+(-0.05,0.16)$) {$0$};
    \end{tikzpicture}
  \end{minipage}
  \hfill
  \begin{minipage}[c]{0.40\textwidth}
    \[
      \begin{gathered}
        \pi=\{1,4,5\}\mid\{2\}\mid\{3\}\\
        {}\mid\{6,8,9\}\mid\{7\}
      \end{gathered}
    \]
    \[
      \skipstat(\pi)=(2+0)+(1+0)=3
    \]
    \[
      \begin{aligned}
        \{1,4,5\} &: (4-1-1)+(5-4-1), \\
        \{6,8,9\} &: (8-6-1)+(9-8-1).
      \end{aligned}
    \]
    \centering
    \textcolor{blue!65!black}{Blue labels} mark the counted edges; the gray dashed edges are ignored.
  \end{minipage}
  \caption{Computing the skip statistic from a noncrossing partition. The contribution of each counted edge is the number of labels strictly between its endpoints.}
  \label{fig:skip-on-nc}
\end{figure}

We derived an explicit formula for the translation of the bounce only for $k=3$.
The bounce--area connections in \Cref{fig:stat-atlas} are obtained by running our ML methods on the \textit{flipped bounce}, i.e., the bounce of the noncrossing partition reflected in the segment $[1, n]$.
This flip is convenient in the refined setting (introduced in \Cref{sec:skip-testbed}): unlike the usual bounce, the flipped bounce is compatible with the filtration by $m(\pi)$ in the sense that it is unchanged when appending singleton blocks at the end.
Working with the flipped bounce is essentially equivalent, since there is a simple bijection on $\NC(n, 3)$ that fixes the area and exchanges bounce with flipped bounce; we formalize this bijection in Lean (see supplementary code).

\subsection{\texorpdfstring{Exchanging Bijection for $N_{n, 3}(q, t)$}{Exchanging Bijection for N(n,3)(q,t)}}

The bijection on $\NC(n, 3)$ introduced in \Cref{sec:bijection} exchanges the skip and leap statistics, as exemplified in \Cref{fig:bijection-n4-k3} for $n=4$.
This constitutes the first combinatorial proof of the symmetry $N_{n, 3}(q, t) = N_{n, 3}(t, q)$ for all $n \geq 3$, as verified through formal Lean proofs (\Cref{sec:lean-appendix}).
Our bijection preserves the block sizes and permutes the gaps between numbers in non-singleton blocks.

\begin{figure}[ht]
    \centering
    \begin{tikzpicture}[scale=0.92, every node/.style={font=\scriptsize}]
    \def\r{0.55}

    \newcommand{\basefour}{
      \draw (0,0) circle (\r);

      \coordinate (v1) at ( 135:\r);
      \coordinate (v2) at (  45:\r);
      \coordinate (v3) at ( -45:\r);
      \coordinate (v4) at (-135:\r);

      \fill (v1) circle (1.6pt);
      \fill (v2) circle (1.6pt);
      \fill (v3) circle (1.6pt);
      \fill (v4) circle (1.6pt);

      \node[anchor=south east] at ($(0,0)!0.9!(v1)$) {$1$};
      \node[anchor=south west] at ($(0,0)!0.9!(v2)$) {$2$};
      \node[anchor=north west] at ($(0,0)!0.9!(v3)$) {$3$};
      \node[anchor=north east] at ($(0,0)!0.9!(v4)$) {$4$};
    }

    \begin{scope}[xshift=0.0cm, yshift=1.25cm]
      \basefour
      \draw[fill=black!20] (v1)--(v2)--(v3)--cycle;
      \node[below=0pt] at (0,-0.80) {$\skipstat = 0$};
      \node[below=10pt] at (0,-0.80) {$\leap = 0$};
      \coordinate (A) at (-1.2,0);
    \end{scope}

    \begin{scope}[xshift=2.5cm, yshift=1.25cm]
      \basefour
      \draw[fill=black!20] (v1)--(v2)--(v4)--cycle;
      \node[below=0pt] at (0,-0.80) {$\skipstat = 1$};
      \node[below=10pt] at (0,-0.80) {$\leap = 0$};
      \coordinate (B) at (-0.8,-0.8);
    \end{scope}

    \begin{scope}[xshift=5cm, yshift=1.25cm]
      \basefour
      \draw[fill=black!20] (v1)--(v3)--(v4)--cycle;
      \node[below=0pt] at (0,-0.80) {$\skipstat = 1$};
      \node[below=10pt] at (0,-0.80) {$\leap = 1$};
      \coordinate (C) at (1.2,0);
    \end{scope}

    \begin{scope}[xshift=0.0cm, yshift=-1.25cm]
      \basefour
      \draw[fill=black!20] (v2)--(v3)--(v4)--cycle;
      \node[below=0pt] at (0,-0.80) {$\skipstat = 0$};
      \node[below=10pt] at (0,-0.80) {$\leap = 1$};
      \coordinate (D) at (0.8,0.8);
    \end{scope}

    \begin{scope}[xshift=2.5cm, yshift=-1.25cm]
      \basefour
      \draw (v1)--(v2);
      \draw (v3)--(v4);
      \node[below=0pt] at (0,-0.80) {$\skipstat = 0$};
      \node[below=10pt] at (0,-0.80) {$\leap = 2$};
      \coordinate (E) at (0.8,0);
    \end{scope}

    \begin{scope}[xshift=5cm, yshift=-1.25cm]
      \basefour
      \draw (v1)--(v4);
      \draw (v2)--(v3);
      \node[below=0pt] at (0,-0.80) {$\skipstat = 2$};
      \node[below=10pt] at (0,-0.80) {$\leap = 0$};
      \coordinate (F) at (-0.8,0);
    \end{scope}

    \begin{scope}[thick,purple]
        \draw[-latex] (A) ++(45:4.5mm) arc (45:315:4.5mm);
        \draw[latex-] (C) ++(225:4.5mm) arc (225:225+270:4.5mm);

        \draw[-latex] ($(B) + (0.1, -0.1)$) to[bend left=18] ($(D) + (0.1, -0.1)$);
        \draw[-latex] ($(D) + (-0.1, 0.1)$) to[bend left=18] ($(B) + (-0.1, 0.1)$);

        \draw[-latex] ($(E) + (0, 0.1)$) to[bend left=18] ($(F) + (0, 0.1)$);
        \draw[-latex] ($(F) + (0, -0.1)$) to[bend left=18] ($(E) + (0, -0.1)$);
\end{scope}

    \end{tikzpicture}
    \caption{Exchanging bijection for $n=4$ and $k=3$.}
    \label{fig:bijection-n4-k3}
\end{figure}

\subsection{The Min Gap Arc Statistic}
\label{sec:mingarc-appendix}

The \emph{min gap arc} statistic ($\mingarc$) is a human-defined extension of the flipped skew statistic discovered by our ML methods on $\NC(n,3)$. It is defined on $\NC(n,k)$ for arbitrary $k$ through the arc-selection procedure below and agrees with flipped skew when $k=3$.
Together with skip, it gives a combinatorial interpretation of $N_{n,k}(q,t)$.

\textbf{Candidate arcs.}
Let $\pi\in\NC(n,k)$ with $h=n-k+1$ blocks. Order the block minima and maxima as
\[
1=m_1<\cdots<m_h, \qquad c_1<\cdots<c_h=n,
\]
and set $m_{h+1}:=n+1$. For each block $r$ let $\nu_r=m_{r+1}-m_r-1$.
List the elements of $[n]$ that are \emph{not} block maxima in increasing order as $x_1<\cdots<x_{k-1}$ (the \emph{left endpoints}), and form the \emph{right endpoint} list by repeating $c_r$ exactly $\nu_r$ times for each $r$ in sequence, yielding a weakly increasing list $y_1\le\cdots\le y_{k-1}$.
The \emph{candidate arcs} are the intervals $[x_i, y_i]$.

\textbf{Greedy choice.}
Scan the candidate arcs left to right.
Keep the first arc; thereafter always keep the earliest arc whose left endpoint is greater than the right endpoint of the last kept arc.
The \emph{selected arcs} are the ones kept by this greedy procedure, and $\mathcal{I}\subseteq\{1,\dots,k-1\}$ is the set of their indices.

\textbf{Statistic.}
We define $\mingarc$ by summing the gaps from the right endpoints of the selected arcs to $n$:
\[
  \mingarc(\pi) = \sum_{i \in \mathcal{I}} (n-y_i).
\]

The (skip, mingarc) pairing yields the $q,t$-Narayana polynomial:
\[
  N_{n,k}(q,t)=\sum_{\pi\in\NC(n,k)} q^{\skipstat(\pi)}\, t^{\mingarc(\pi)}.
\]

\begin{figure}[ht]
  \centering
  \begin{subfigure}[t]{0.48\linewidth}
    \centering
    \begin{tikzpicture}[scale=0.92, every node/.style={font=\scriptsize}]
      \def\r{0.62}
      \tikzset{
        ncblock/.style={black!70, line width=0.9pt},
        candarc/.style={black!35, line width=0.85pt, dashed, line cap=round},
        chosenarc/.style={BrickRed, line width=1.25pt, line cap=round}
      }
      \newcommand{\basesix}{
        \coordinate (c0) at (0,0);
        \draw (0,0) circle (\r);
        \coordinate (v1) at (120:\r);
        \coordinate (v2) at (60:\r);
        \coordinate (v3) at (0:\r);
        \coordinate (v4) at (-60:\r);
        \coordinate (v5) at (-120:\r);
        \coordinate (v6) at (180:\r);
        \fill (v1) circle (1.5pt);
        \fill (v2) circle (1.5pt);
        \fill (v3) circle (1.5pt);
        \fill (v4) circle (1.5pt);
        \fill (v5) circle (1.5pt);
        \fill (v6) circle (1.5pt);
        \node[anchor=south east] at ($(0,0)!1.18!(v1)$) {$1$};
        \node[anchor=south west] at ($(0,0)!1.18!(v2)$) {$2$};
        \node[anchor=west] at ($(0,0)!1.18!(v3)$) {$3$};
        \node[anchor=north west] at ($(0,0)!1.18!(v4)$) {$4$};
        \node[anchor=north east] at ($(0,0)!1.18!(v5)$) {$5$};
        \node[anchor=east] at ($(0,0)!1.18!(v6)$) {$6$};
      }
      \newcommand{\circlearc}[6]{%
        \pgfmathsetmacro{\arcR}{(#4)*(#5)*\r}
        \pgfmathsetmacro{\gammaang}{asin(1/(#5))}
        \pgfmathsetmacro{\startang}{(#2)+180+(#3)*\gammaang}
        \pgfmathsetmacro{\endang}{(#2)+180-(#3)*\gammaang}
        \draw[#6] (#1) arc[start angle=\startang, end angle=\endang, radius=\arcR];
      }

      \begin{scope}[xshift=0cm]
        \node at (0,1.05) {};
        \basesix
        \draw[ncblock] (v1) -- (v6);
        \draw[ncblock] (v2) -- (v3);
        \draw[ncblock] (v4) -- (v5);
      \end{scope}

      \begin{scope}[xshift=2.35cm]
        \node at (0,1.05) {};
        \basesix
        \draw[chosenarc] (v1) to[bend left=42] (v5);
        \draw[candarc] (v2) to[bend left=28] (v6);
        \draw[candarc] (v4) to[bend right=28] (v6);
      \end{scope}
    \end{tikzpicture}
    \caption{$\pi=16\mid23\mid45$. The selected arc is $1$--$5$; the candidate arcs $2$--$6$ and $4$--$6$ are not selected.}
  \end{subfigure}
  \hfill
  \begin{subfigure}[t]{0.48\linewidth}
    \centering
    \begin{tikzpicture}[scale=0.92, every node/.style={font=\scriptsize}]
      \def\r{0.62}
      \tikzset{
        ncblock/.style={black!70, line width=0.9pt},
        candarc/.style={black!35, line width=0.85pt, dashed, line cap=round},
        chosenarc/.style={BrickRed, line width=1.25pt, line cap=round}
      }
      \newcommand{\basesix}{
        \coordinate (c0) at (0,0);
        \draw (0,0) circle (\r);
        \coordinate (v1) at (120:\r);
        \coordinate (v2) at (60:\r);
        \coordinate (v3) at (0:\r);
        \coordinate (v4) at (-60:\r);
        \coordinate (v5) at (-120:\r);
        \coordinate (v6) at (180:\r);
        \fill (v1) circle (1.5pt);
        \fill (v2) circle (1.5pt);
        \fill (v3) circle (1.5pt);
        \fill (v4) circle (1.5pt);
        \fill (v5) circle (1.5pt);
        \fill (v6) circle (1.5pt);
        \node[anchor=south east] at ($(0,0)!1.18!(v1)$) {$1$};
        \node[anchor=south west] at ($(0,0)!1.18!(v2)$) {$2$};
        \node[anchor=west] at ($(0,0)!1.18!(v3)$) {$3$};
        \node[anchor=north west] at ($(0,0)!1.18!(v4)$) {$4$};
        \node[anchor=north east] at ($(0,0)!1.18!(v5)$) {$5$};
        \node[anchor=east] at ($(0,0)!1.18!(v6)$) {$6$};
      }
      \newcommand{\circlearc}[6]{%
        \pgfmathsetmacro{\arcR}{(#4)*(#5)*\r}
        \pgfmathsetmacro{\gammaang}{asin(1/(#5))}
        \pgfmathsetmacro{\startang}{(#2)+180+(#3)*\gammaang}
        \pgfmathsetmacro{\endang}{(#2)+180-(#3)*\gammaang}
        \draw[#6] (#1) arc[start angle=\startang, end angle=\endang, radius=\arcR];
      }

      \begin{scope}[xshift=0cm]
        \node at (0,1.05) {};
        \basesix
        \draw[ncblock] (v1) -- (v2);
        \draw[ncblock] (v3) -- (v4);
        \draw[ncblock] (v5) -- (v6);
      \end{scope}

      \begin{scope}[xshift=2.35cm]
        \node at (0,1.05) {};
        \basesix
        \circlearc{v1}{90}{-1}{0.5}{1.25}{candarc}
        \circlearc{v3}{-30}{-1}{0.5}{1.25}{candarc}
        \circlearc{v5}{-150}{-1}{0.5}{1.25}{candarc}
        \circlearc{v1}{90}{-1}{0.5}{1.25}{chosenarc}
        \circlearc{v3}{-30}{-1}{0.5}{1.25}{chosenarc}
        \circlearc{v5}{-150}{-1}{0.5}{1.25}{chosenarc}
      \end{scope}
    \end{tikzpicture}
    \caption{$\pi=12\mid34\mid56$. The candidate arcs are disjoint, so all of them are selected.}
  \end{subfigure}
  \caption{Min gap arc construction on two noncrossing partitions. Solid red arcs are selected; dashed gray arcs are candidate arcs that are not selected.}
  \label{fig:mingarc-arcs}
\end{figure}

\textbf{The case $k=3$.}
There are exactly two candidate arcs.
They overlap when the partition contains a triple block or a nested pair; in the non-nested case they are disjoint and both are kept.
The statistic therefore evaluates to
\[
  \mingarc(abc) = n-b-1,
  \qquad
  \mingarc(ab \mid cd) = 2n-b-d \quad (a<b<c<d),
\]
and in the nested case
\[
  \mingarc(ab \mid cd) = n-d-1 \qquad (a<c<d<b).
\]
These are exactly the values of \eqref{eq:skew} composed with the flip $i\mapsto n+1-i$.
Hence, on $\NC(n,3)$, the statistic $\mingarc$ is precisely the flipped skew.

\section{Formal Lean Proofs}
\label{sec:lean-appendix}

Lean~4 is an interactive theorem prover (proof assistant) and functional programming language for writing machine-checked mathematics.
In Lean, mathematical statements are encoded as types and proofs as terms inhabiting those types; correctness is verified by a small trusted kernel via type checking.
During development, Lean also supports the keyword \texttt{sorry}, which acts as an explicit placeholder for a missing proof term: it allows the file to type-check while recording a proof gap.
We refer to \citep{the_lean4_paper} for a detailed description of the system.
Our formalization builds on Mathlib, the community-maintained library of definitions and theorems for Lean \cite{mathlib_cpp2020}.

To accelerate the formalization effort, we made extensive use of \emph{Aristotle}, Harmonic's Lean formalization system \cite{aristotle}, which closes proof gaps and autoformalizes informal arguments.

This appendix gives a brief guide to the Lean~4 code accompanying this paper.
Our goal in using Lean is \emph{verification}: the definitions and proofs that certify the main combinatorial claims are checked by Lean's kernel, making the verification independently reproducible.

\begin{figure}[h]
  \centering
  \begin{tikzpicture}[scale=1.04, transform shape,
    font=\footnotesize,
    >=Latex,
    node distance=16mm and 32mm,
    box/.style={draw, rounded corners=2pt, inner sep=3pt, minimum width=20mm, align=center, font=\scriptsize},
    stat/.style={font=\scriptsize\itshape},
    equiv/.style={ultra thick},
    incl/.style={ultra thick},
    trans/.style={thick},
    conj/.style={dotted, thick},
    lift/.style={thick, bend left=16},
    lift2/.style={thick, bend right=16},
    squig/.style={decorate, decoration={snake, amplitude=1.2pt, segment length=10pt}, thick},
    special/.style={double, thick}
  ]

  \node[box] (poly) {Polyomino};
  \node[box, below=of poly] (poly3) {Poly3};

  \node[box, right=30mm of poly] (nc) {NC};
  \node[box, below=of nc] (nc3) {NC3};

  \draw[equiv] (poly) -- (nc);
  \draw[incl] (poly) -- (poly3);
  \draw[incl] (nc) -- (nc3);
  \draw[incl] (poly3) -- (nc3);

  \node[stat, left=2mm of poly, yshift=5mm] (polyBounce) {bounce};
  \node[stat, above=2mm of polyBounce, xshift=-10mm] (polyArea) {area};
  \node[stat, left=2mm of poly3, yshift=-5mm] (poly3Bounce) {bounce};
  \node[stat, below=2mm of poly3Bounce, xshift=-10mm] (poly3Area) {area};

  \draw[squig] (polyArea) to (polyBounce);

  \draw[trans,lift2]  (polyArea) to (poly3Area);
  \draw[trans,lift2] (polyBounce) to (poly3Bounce);

  \draw[squig] (poly3Area) to (poly3Bounce);

  \node[stat, right=2mm of nc, yshift=5mm] (ncArea) {area};
  \node[stat, above=2mm of ncArea, xshift=10mm] (ncMingarc) {mingarc};
  \node[stat, right=8mm of ncMingarc] (ncSkip) {skip};

  \draw[squig, densely dashed] (ncMingarc) to (ncSkip);

  \node[stat, below=2mm of nc3] (nc3Bounce) {bounce};

  \draw[trans, lift2] (poly3Bounce) to (nc3Bounce);

  \draw[trans, lift] (polyArea) to (ncArea);

  \node[stat, right=2mm of nc3, yshift=-10mm] (statArea) {area};
  \node[stat, right=26mm of statArea] (statSkip) {skip};
  \node[stat] (statLeap) at ($(statArea)!0.5!(statSkip)$) {leap};
  \node[stat] (statSkew) at ($(statArea)!0.5!(statSkip) + (0, 4.5mm)$) {skew};
  \node[stat] (statSkewFlip) at ($(statArea)!0.5!(statSkip) + (0, 9mm)$) {skewFlip};
  \node[stat] (mag) at ($(statArea)!0.5!(statSkip) + (0, -4.5mm)$) {mag};

  \draw[squig] (statArea) -- (statLeap);
  \draw[squig] (statLeap) -- (statSkip);
  \draw[squig] (statSkew) -- (statArea);
  \draw[squig] (statSkew) -- (statSkip);
  \draw[squig] (statSkewFlip.east) -- (statSkip);

  \draw[trans, lift2] (poly3Area) to (statArea);
  \draw[squig] (nc3Bounce) to (statArea);
  \draw[trans, lift] (ncSkip) to (statSkip);
  \draw[trans, lift] (ncArea) to (statArea);
  \draw[trans, lift] (ncMingarc) to (statSkewFlip);

  \draw[squig] (mag) -- (statArea);
  \draw[squig] (mag) -- (statSkip);

  \end{tikzpicture}
  \caption{Overview of the objects and statistics covered by the Lean code. Thick edges indicate equivalences of types, and are intended with specific $m$, $n$, and $k$ values.
  Curved thin edges indicate transport of statistics across these maps; squiggly edges indicate pairing relationships among statistics.
  All edges shown correspond to statements whose proofs are formalized in the current Lean development.
  }
  \label{fig:appendix-stat-atlas}
\end{figure}

At a high level, the development follows the structure summarized in Figure~\ref{fig:appendix-stat-atlas}.
Two of the boxes (Polyomino and NC) correspond to the main mathematical objects appearing in this paper: parallelogram polyominoes and noncrossing partitions. Both have been formalized in Lean in full generality, for arbitrary parameters $m$, $n$, and $k$.
Since our main results concern the case $k=3$ (equivalently, $m=3$), we also introduce two specialized definitions (Poly3 and NC3) for those instances.
In these specialized representations, objects can be described by three or four natural numbers and are easier to manipulate algebraically.
To ensure correctness, we connect the specialized encodings to the general definitions via explicit bijections.
The Lean code formalizes (i) maps between these objects and representations and (ii) the way statistics are translated across them.
It also formalizes a number of statistic pairings, including the ``atlas'' connections on NC3 (\Cref{fig:stat-atlas}) and the all-$k$ \texttt{skip}--\texttt{mingarc} pairing on \texttt{NC}.

\subsection{What is verified}
We formalize all the mathematical results found in the paper, as well as the necessary mathematical objects needed to state them and the basic combinatorial theory around them.
For a reader-oriented entry point, we recommend opening \texttt{qtLearning.lean}, which is a curated index of the main definitions and theorems (in VS Code, the imported filenames and \texttt{\#check} declarations are clickable once the file compiles).
For instructions on building the project and running the artifact locally, see the file \texttt{README.md}.
\begin{itemize}[leftmargin=*]
  \item \textbf{Representations and maps.} Definitions of the main types (\texttt{Polyomino} and \texttt{NC}) and their specializations (\texttt{Poly3} and \texttt{NC3}), together with the key connecting maps (\texttt{Phi}/\texttt{Psi}, \texttt{NC.equivNC3}, \texttt{Poly3.equivNC3}, \texttt{Poly3.equivPolyomino}).
  The forward/backward maps are implemented throughout, and the stated equivalences are fully proved, including the general Polyomino $\leftrightarrow$ NC equivalence \texttt{Poly\_NC\_equiv} built from \texttt{Phi} and \texttt{Psi}.

  \item \textbf{Statistics.} Definitions of the statistics used in the paper (\texttt{area}, \texttt{bounce}, \texttt{bounceFlip}, \texttt{skip}, \texttt{leap}, \texttt{skew}, \texttt{skewFlip}, \texttt{mag}, \texttt{mingarc}) and lemmas expressing compatibility under the representation maps (e.g., \texttt{NC3.toNC\_skip}, \texttt{NC.toNC3\_skip}, \texttt{NC3.toNC\_area}).
  All compatibility lemmas needed for the $k=3$ results are proved, and the full Polyomino-to-NC transport of \texttt{area} through \texttt{Phi}/\texttt{Psi} is proved as \texttt{Polyomino.Phi\_area} and \texttt{NC.Psi\_area}.

  \item \textbf{Pairings.} We formalize Narayana pairings by expressing that two pairs of statistics have the same joint distribution as $(\area,\bounce)$ on \texttt{Polyomino}.
  The atlas pairings among \texttt{area}, \texttt{skip}, \texttt{leap}, \texttt{skew}, \texttt{mag}, and \texttt{skewFlip} are fully verified on \texttt{NC3}; the all-$k$ \texttt{skip}--\texttt{mingarc} pairing on \texttt{NC} is proved as \texttt{NC.skip\_mingarc\_pair}.
  The pairings on \texttt{Polyomino} and on \texttt{Poly3} are also formalized.
  It is noteworthy that the proofs of the core $k=3$ pairing theorems (e.g., \texttt{NC3.skip\_leap\_pair} and related atlas pairings) were found and formalized end-to-end by Aristotle, without human intervention.

  \item \textbf{Exchanging bijections and $q,t$-symmetry.} We formalize explicit bijections on \texttt{NC3} that exchange pairs of statistics (e.g., \texttt{skip}$\leftrightarrow$\texttt{leap}).
  In particular, the Lean development includes a fully formal, kernel-checked bijective proof of the symmetry $N_{n,3}(q,t)=N_{n,3}(t,q)$ for all $n\ge 3$, which is a special case of \Cref{prob:narayana-bijection}.

  \item \textbf{Narayana polynomials.} A lightweight interface for $q,t$-generating functions of statistics, including \texttt{qtPoly} and \texttt{EquallyDistributed\textsubscript{2}}.
  We also define the $q,t$-Narayana polynomial \texttt{Narayana.qtPoly} as the $(\area,\bounce)$ generating function on \texttt{Polyomino}.
  In the Lean code, this polynomial is parameterized using the polyomino convention $(m,n)$ rather than the noncrossing convention $(n,k)$.
  We intentionally implement this interface as functions $\mathbb{N}\to\mathbb{N}\to\mathbb{N}$ (rather than using Mathlib's polynomial types), since we only need coefficient-level identities coming from finite enumerations.
  The formalized interface includes the $m=3$ symmetry theorem \texttt{Narayana.qtPoly\_3\_symm} and the coefficientwise monotonicity theorem \texttt{Narayana.incrementalNarayanaPositive} used in \Cref{prop:incremental-narayana-positive}.

\end{itemize}

Finally, we include some test files, where we define specific instances of polyominoes and noncrossing partitions and check their statistics.
This is possible because most of our implementation (including the statistics) is computable, so we can run the calculations in concrete cases.
We also include tests comparing the formally computed \texttt{Narayana.qtPoly} against generator-produced ground truth up to $n=14$, along with formal proofs that the two sources agree.

\subsection{Formalization workflow}
While Lean provides the final verification guarantee (kernel-checked definitions and proofs), the development process was highly iterative.
We made extensive use of Aristotle in two complementary modes:
\begin{itemize}[leftmargin=*]
  \item \textbf{Sorry-filling mode.} During development, some files temporarily contained \texttt{sorry} placeholders.
  Aristotle was used to propose proof terms and tactic scripts to solve these goals.
  Candidate proofs were then refined by hand when needed, mostly for performance and readability reasons.
  The sorry-filling mode was mainly used without informal guidance, relying on Aristotle's proof search capabilities.
  In particular, in the \texttt{NC3} specialization, Aristotle was able to autonomously prove the core pairing theorems end-to-end.
  \item \textbf{Autoformalization mode.}
  This mode takes both a \emph{formal} input (the Lean file being developed, including its imports and local context) and an \emph{informal} input (a text or \LaTeX{} document containing an informal description of the mathematics or a proof sketch), and translates informal mathematics to Lean.
  As with sorry-filling, the resulting artifacts were iterated on when needed to align definitions, make implicit conditions explicit, and fit proofs to the existing library interface.
\end{itemize}

In practice, this created a tight loop between (i) writing or refining the informal arguments, (ii) re-running automation to update the Lean files, (iii) consolidating and polishing the resulting proof scripts, and, when some solvable \texttt{sorry}s were identified, (iv) launching a sorry-filling run to eliminate them.
The end product is still fully checked by Lean, so the use of automation only affects developer productivity, and not the trust model.

\subsection{How to build the artifacts}
To check the implementation locally, follow the instructions in \texttt{README.md}.

The Lean project is pinned to Lean toolchain version \texttt{v4.28.0} (see the \texttt{lean-toolchain} file in the Lean project directory).

To build it, first change into the directory containing \texttt{lakefile.toml}: this is the repository root in a full development checkout, and the \texttt{Lean/} directory in the supplementary artifact. Then run:
\begin{verbatim}
lake exe cache get!
lake build
\end{verbatim}

To check whether a specific theorem statement depends on admitted proofs, Lean provides an ``axioms'' report.
In a Lean file, you can run \texttt{\#print axioms <theoremName>} after the theorem (e.g., on a pairing theorem);
if the output mentions \texttt{sorryAx}, then the theorem depends (directly or indirectly) on a \texttt{sorry}.
It is normal for fully proved results in Mathlib-based developments to depend on a small set of standard axioms such as \texttt{propext}, \texttt{Classical.choice}, and \texttt{Quot.sound}; some computational proofs can also report Lean's trusted reduction/compiler axioms.
These do not indicate missing proofs.

\end{document}